\def\eqref#1{equation~\ref{#1}}
\def\1{\bm{1}}
\DeclareMathAlphabet{\mathsfit}{\encodingdefault}{\sfdefault}{m}{sl}
\SetMathAlphabet{\mathsfit}{bold}{\encodingdefault}{\sfdefault}{bx}{n}
\newcommand{\mr}[2]{\multirow{#1}{*}{#2}}
\newcommand{\mc}[3]{\multicolumn{#1}{#2}{#3}}
\newcolumntype{P}[1]{>{\centering\arraybackslash}p{#1}}
\definecolor{darkgreen}{rgb}{0,0.5,0}
\definecolor{purple}{rgb}{1,0,1}
\newcommand{\comm}[2]{\ifnum\COMMENTs=1\textcolor{#1}{#2}\fi}
\newcommand{\xhdr}[1]{{\noindent\bfseries #1}.}
\definecolor{dkred}{rgb}{0.5,0,0}
\definecolor{dkgreen}{rgb}{0,0.6,0}
\definecolor{gray}{rgb}{0.5,0.5,0.5}
\definecolor{mauve}{rgb}{0.58,0,0.82}
\newcommand{\ie}{\textit{i.e.}}
\newcommand{\eg}{\textit{e.g.}}
\definecolor{customgray}{rgb}{0.3,0.3,0.3}
\definecolor{customgreen}{RGB}{140,211,89}
\newcommand{\std}[1]{\textcolor{customgray}{\scriptsize{$\pm$#1}}}
\newcommand{\Lcal}{\mathcal{L}}
\newcommand{\Xcal}{\mathcal{X}}
\newcommand{\Zcal}{\mathcal{Z}}
\newcommand{\EE}{\mathbb{E}} 
\newcommand{\PP}{\mathbb{P}} 
\newcommand{\RR}{\mathbb{R}} 
\newcommand{\signn}{\mathop{\mathrm{sign}}}
\newcommand{\one}{\mathbf{1}}  
\newenvironment{proof}{\par\noindent{\bf Proof\ }}{\hfill\BlackBox\\[2mm]}
\theoremstyle{plain}
\newtheorem{theorem}{Theorem}
\newtheorem{lemma}{Lemma}
\theoremstyle{definition}
\newcommand{\htheta}{\hat \theta}
\newcommand{\by}{\bar y}
\newcommand{\bx}{\bar x}
\newcommand{\tx}{\tilde x}
\newcommand{\ty}{\tilde y}
\newcommand{\tell}{\tilde \ell}
\newcommand{\tLcal}{\tilde \Lcal}
\newcommand{\bz}{\bar z}
\newcommand{\methodname}{\textsc{TuneUp}}
\title{\methodname{}: A Simple Improved Training Strategy for Graph Neural Networks}
\author{
    Weihua Hu\textsuperscript{\rm 1}, Kaidi Cao\textsuperscript{\rm 1}, Kexin Huang\textsuperscript{\rm 1}, Edward W Huang\textsuperscript{\rm 2}, \\ Karthik Subbian\textsuperscript{\rm 2}, Kenji Kawaguchi\textsuperscript{\rm 3}, Jure Leskovec\textsuperscript{\rm 1}.
}
\begin{document}
\maketitle

\begin{abstract}
Despite recent advances in Graph Neural Networks (GNNs), their training strategies remain largely under-explored.
The conventional training strategy learns over all nodes in the original graph(s) equally, which can be sub-optimal as certain nodes are often more difficult to learn than others.
Here we present \methodname{}, a simple curriculum-based training strategy for improving the predictive performance of GNNs.
\methodname{} trains a GNN in two stages.
In the first stage, \methodname{} applies conventional training to obtain a strong base GNN.
The base GNN tends to perform well on head nodes (nodes with large degrees) but less so on tail nodes (nodes with small degrees). Therefore, the second stage of \methodname{} focuses on improving prediction on the difficult tail nodes by further training the base GNN on synthetically generated tail node data.
We theoretically analyze \methodname{} and show it provably improves generalization performance on tail nodes.
\methodname{} is simple to implement and applicable to a broad range of GNN architectures and prediction tasks.
Extensive evaluation of \methodname{} on five diverse GNN architectures, three types of prediction tasks, and both transductive and inductive settings shows that \methodname{} significantly improves the performance of the base GNN on tail nodes, while often even improving the performance on head nodes. Altogether, \methodname{} produces up to 57.6\% and 92.2\% relative predictive performance improvement in the transductive and the challenging inductive settings, respectively.
\end{abstract}




\maketitle

\section{Introduction}
\label{sec:intro}
Graph Neural Networks (GNNs) are one of the most successful and widely used paradigms for representation learning on graphs, achieving state-of-the-art performance on a variety of prediction tasks, such as semi-supervised node classification~\citep{kipf2016semi,velivckovic2017graph}, link prediction~\citep{hamilton2017inductive,kipf2016variational}, and recommender systems~\citep{ying2018graph,he2020lightgcn}.
There has been a surge of work on improving GNN model architectures~\citep{velivckovic2017graph,xu2018how,xu2018representation,shi2020masked,klicpera2018predict,wu2019simplifying,zhao2019pairnorm,li2019deepgcns,chen2020simple,li2021training} and task-specific losses~\citep{kipf2016variational,rendle2012bpr,verma2021graphmix,huang2021mixgcf}.
Despite all these advances, strategies for training a GNN on a given supervised loss remain largely under-explored.
Existing work has focused on minimizing the given loss averaged over nodes in the original graph(s), which neglects the fact that some nodes are more difficult to learn than others.

\begin{figure*}
\centering
\includegraphics[width=\linewidth]{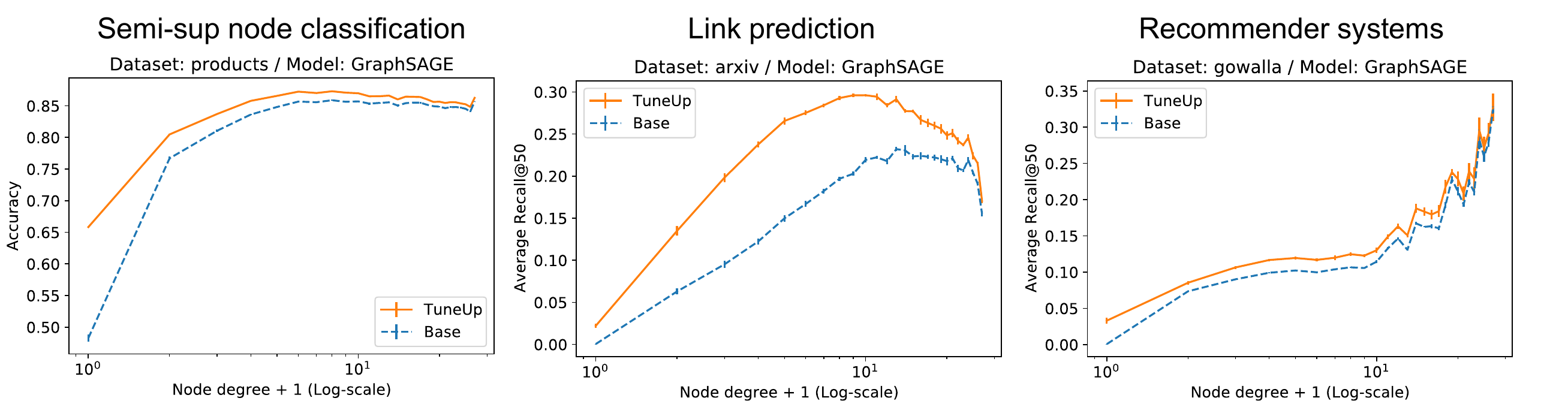}
\caption{Degree-specific predictive performance of the base GNN (trained with conventional training) and \methodname{} GNN in the transductive setting. Between ``Base'' and ``\methodname{}'', only the training strategy differs; \emph{the model architecture and the loss function stay exactly the same}.
The $x$-axis represents the node degrees in the training graph, and the $y$-axis is the predictive performance averaged over nodes with specific degrees. We see from the dotted blue curves that the base GNN tends to perform poorly on tail nodes, \ie, nodes with small degrees. Our \methodname{} (denoted by the solid orange curves) gives more accurate GNNs than conventional training (``Base''). \methodname{} improves the predictive performance across almost all node degrees, but most significantly on tail nodes. 
}
\label{fig:base_comp}
\end{figure*}

Here we present \methodname{}, a simple improved training strategy for improving the predictive performance of GNNs.
The key motivation behind \methodname{} is that GNNs tend to under-perform on tail nodes, \ie, nodes with small (\eg, 0--5) node degrees, due to the scarce neighbors to aggregation features from~\citep{liu2021tail}. Improving GNN performance on tail nodes is important since they are prevalent in real-world scale-free graphs~\citep{clauset2009power} as well as newly arriving \emph{cold-start} nodes~\citep{lika2014facing}. 

The key idea of \methodname{} is to adopt curriculum-based training~\cite{bengio2009curriculum}, where it first trains a GNN to perform well on relatively-easy head nodes. It then proceeds to further train the GNN to perform well on the more difficult tail nodes by minimizing the loss over supervised tail node data that is synthetically generated via pseudo-labeling and dropping edges~\citep{rong2019dropedge}.


We theoretically analyze \methodname{} and show that it provably improves the generalization performance on tail nodes by utilizing information from head nodes. Our theory also justifies how \methodname{} generates supervised synthetic tail nodes in the second stage. Our theory suggests that both pseudo-labeling and dropping edges are crucial for improved generalization.

\methodname{} is simple to implement on top of the conventional training pipeline of GNNs, as shown in Algorithm~\ref{alg:method}. 
Thanks to its simplicity, \methodname{} can be readily used with a wide range of GNN models and supervised losses; hence, applicable to many node and edge-level prediction tasks.
This is in contrast with recent methods for improving GNN performance on tail nodes~\citep{liu2021tail,zheng2021cold,zhang2021graph,kang2022rawlsgcn} as they all require non-trivial modifications of both GNN architectures and loss, making them harder to implement and not applicable to diverse prediction tasks.

To demonstrate the effectiveness and broad applicability of \methodname{}, we perform extensive experiments on a wide range of settings. 
We consider five diverse GNN architectures, three types of key prediction tasks (semi-supervised node classification, link prediction, and recommender systems) with a total of six datasets, as well as both transductive (\ie, prediction on nodes seen during training) and inductive (\ie, prediction on new nodes not seen during training) settings. For the inductive setting, we additionally consider the challenging cold-start scenario (\ie, limited edge connectivity from new nodes) by randomly removing certain portions of edges from new nodes. 

Across the settings, \methodname{} produces consistent improvement in the predictive performance of GNNs.
In the transductive setting, \methodname{} significantly improves the performance of base GNNs on tail nodes, while oftentimes even improving the performance on head nodes (see Figure~\ref{fig:base_comp}).
In the inductive setting, \methodname{} especially shines in the cold-start prediction scenario, where new nodes are tail-like, producing up to 92.2\% relative improvement in the predictive performance.
Moreover, our \methodname{} significantly outperforms the recent specialized methods for tail nodes~\citep{liu2021tail,zheng2021cold,zhang2021graph,kang2022rawlsgcn}, while not requiring any modification to GNN architectures nor losses. 
Overall, our work shows that even a simple training strategy can yield a surprisingly large improvement in the predictive performance of GNNs, pointing to a promising direction to investigate effective training strategies for GNNs, beyond architectures and losses.

\section{General Setup and Conventional Training}
\label{sec:setup}
Here we introduce a general task setup for machine learning on graphs and review the conventional training strategy of GNNs. 

\subsection{General Setup}
\label{sec:general_setup}
We are given a graph $G = (V, E)$, with a set of nodes $V$ and edges $E$, possibly associated with some features.
A GNN $F_{\theta}$, parameterized by $\theta$, takes the graph $G$ as input and makes prediction $\widehat{Y}$ for the task of interest. The loss function $L$ measures the discrepancy between the GNN's prediction $\widehat{Y}$ and the target supervision $Y$. 
When input node features are available, GNN $F_\theta$ can make not only transductive predictions, \ie, prediction over existing nodes $V$, but also inductive predictions~\citep{hamilton2017inductive}, \ie, prediction over new nodes $V_{\rm new}$ that are not yet present in $V$.
This is a general task setup that covers many representative predictive tasks over graphs as special cases:

\subsubsection{Semi-supervised node classification~\citep{kipf2016semi}}

\begin{itemize}[noitemsep,nolistsep,leftmargin=*]
    \item \textbf{Graph $G$}: A graph with input node features.
    \item \textbf{Supervison $Y$}: Class labels of labeled nodes $V_{\rm labeled} \subset V$. 
    \item \textbf{GNN $F_\theta$}: A model that takes $G$ as input and predicts class probabilities over $V$.
    \item \textbf{Prediction $\widehat{Y}$}: The GNN's prediction over $V_{\rm labeled}$.
    \item \textbf{Loss $L$}: Cross-entropy loss.
\end{itemize}


\subsubsection{Link prediction~\citep{kipf2016variational}}

\begin{itemize}[noitemsep,nolistsep,leftmargin=*]
    \item \textbf{Graph $G$}: A graph with input node features.
    \item \textbf{Supervison $Y$}: Whether node $s\in V$ is linked to node $t \in V$ in $G$ (positive) or not (negative).
    \item \textbf{GNN $F_\theta$}: A model that takes $G$ as input and predicts the score for a pair of nodes $(s, t) \in V \times V$. Specifically, the model generates embedding $\bm{z}_v$ for each node in $v\in V$ and uses an MLP over the Hadamard product between $\bm{z}_s$ and $\bm{z}_t$ to predict the score for the pair $(s, t)$~\citep{grover2016node2vec}.
    \item \textbf{Prediction $\widehat{Y}$}: The GNN's predicted scores over $V \times V$.
    \item \textbf{Loss $L$}: The Bayesian Personalized Ranking (BPR) loss~\citep{rendle2012bpr}, which encourages the predicted score for the positive pair $(s, t_{\rm pos})$ to be higher than that for the negative pair $(s, t_{\rm neg})$ for each source node $s \in V$.
\end{itemize}


\subsubsection{Recommender systems~\citep{wang2019neural}}
A recommender system is link prediction between user nodes $V_{\rm user}$ and item nodes $V_{\rm item}$.

\begin{itemize}[noitemsep,nolistsep,leftmargin=*]
    \item \textbf{Graph $G$}: User-item bipartite graph \emph{without} input node features.\footnote{We consider the feature-less setting because input node features are not available in many public recommender system datasets, and most existing works rely solely on edge connectivity to predict links.}
    \item \textbf{Supervison $Y$}: Whether a user node $u$ has interacted with an item node $v$ in $G$ (positive) or not (negative).
    \item \textbf{GNN $F_\theta$}: A model that takes $G$ as input and predicts the score for a pair of nodes $(u, v) \in V_{\rm user} \times V_{\rm item}$.
    Following~\citet{wang2019neural}, GNN parameter $\theta$ contains the input shallow embeddings in addition to the original message passing GNN parameter. To produce the score for the pair of nodes $(u, v)$, we generate the user and item embeddings, $\bm{z}_u$ and $\bm{z}_v$, and take the inner product $\bm{z}_u^{\top} \bm{z}_v$ to compute the score~\citep{wang2019neural}.
    \item \textbf{Prediction $\widehat{Y}$}: The GNN's predicted scores over $V_{\rm user} \times V_{\rm item}$.
    \item \textbf{Loss $L$}: The BPR loss~\citep{rendle2012bpr}.
\end{itemize}

\subsection{Conventional GNN Training}
A conventional way to train a GNN~\citep{kipf2016semi} is to minimize the loss $L(\widehat{Y}, Y)$ via gradient descent, as shown in L2--5 of Algorithm\ref{alg:method}. Extension to mini-batch training~\citep{hamilton2017inductive,zeng2019graphsaint} is straightforward by sampling subgraph $G$ in each parameter update.

\begin{algorithm}[t]\small
	\caption{\methodname{}. Compared to the conventional training of a GNN (L2--5), \methodname{} introduces a two-stage training process and only adds two components (L8 and L12) that are straightforward to implement. Each parameter update of \methodname{} is as efficient as the conventional GNN training.}
	\label{alg:method}
	{\bfseries Given:} GNN $F_{\theta}$, graph $G$, loss $L$, supervision $Y$, DropEdge ratio $\alpha$. 
	\begin{algorithmic}[1]
	    \STATE \textbf{\# First stage: Conventional training to obtain a base GNN.}
		\WHILE{$\theta$ not converged}
		\STATE Make prediction $\widehat{Y} = F_{\theta}(G)$
		\STATE Compute loss $L(\widehat{Y}, Y)$, compute gradient $\nabla_{\theta} L$, and update parameter $\theta$.
		\ENDWHILE
        \STATE \textbf{\# Set up for the second stage.}
	    \IF{task is semi-supervised node classification}
		\STATE \textcolor{red}{Use $F_{\theta}$ to predict pseudo-labels on non-isolated, unlabeled nodes. Add the pseudo-labels into $Y$.}
		\ENDIF
    \STATE \textbf{\# Second stage: Further training the base GNN with increased tail supervision.}
		\WHILE{$\theta$ not converged}
		\STATE \textcolor{red}{Synthesize tail nodes, \ie, randomly drop $\alpha$ of edges: $G \xrightarrow{\rm DropEdge} \widetilde{G}$.}
		\STATE Make prediction $\widehat{Y} = F_{\theta}(\widetilde{G})$.
		\STATE Compute loss $L(\widehat{Y}, Y)$, compute gradient $\nabla_{\theta} L$, and update parameter $\theta$.
		\ENDWHILE
	\end{algorithmic}
\end{algorithm}

\xhdr{Issue with Conventional Training}
Conventional training implicitly assumes GNNs can learn over all nodes equally well. In practice, some nodes, such as low-degree tail nodes, are more difficult for GNNs to learn due to the scarce neighborhood information. As a result, GNNs trained with conventional training often give poor predictive performance on the difficult tail nodes~\citep{liu2021tail}.

\section{\methodname{}: An Improved GNN Training}
\label{sec:tuneup}
To resolve the issue, we present \methodname{}, a simple curriculum learning strategy, to improve GNN performance, especially on the difficult tail nodes.
At a high level, \methodname{} first trains a GNN to perform well on the relatively easy head nodes. Then, it further trains the GNN to also perform well on the more difficult tail nodes.

Specifically, in the first stage (L2--5 in  Algorithm~\ref{alg:method}), \methodname{} uses conventional GNN training to obtain a strong base GNN model.
The base GNN model tends to perform well on head nodes, but poorly on tail nodes. To remedy this issue, in the second training stage, \methodname{} futher trains the base GNN on synthetic tail nodes (L7--L15 in Algorithm~\ref{alg:method}). \methodname{} synthesizes supervised tail node data in two steps, detailed next: (1) synthesizing additional tail node inputs, and (2) adding target supervision on the synthetic tail nodes.

\subsubsection{Synthesizing tail node inputs}
In many real-world graphs, head nodes start off as tail nodes, \eg, well-cited paper nodes are not cited at the beginning in a paper citation network, and warm users (users with many item interactions) start off as cold-start users in recommender systems.
Hence, our key idea is to synthesize tail nodes by systematically removing edges from head nodes.
There can be different ways to remove edges. In this work, we simply adopt DropEdge~\citep{rong2019dropedge} to instantiate our idea.
DropEdge drops a certain portion (given by hyperparameter $\alpha$) of edges randomly from the original graph $G$ to obtain $\widetilde{G}$ (L12 in Algorithm~\ref{alg:method}).
The resulting $\widetilde{G}$ contains more nodes with low degrees, \ie, tail nodes, than the original graph $G$. Hence, the GNN sees more (synthetic) tail nodes as input during training.

\subsubsection{Adding supervision on the synthetic tail nodes}
After synthesizing the tail node inputs, \methodname{} then adds target supervision (\eg, class labels for node classification, edges for link prediction) on them so that a supervised loss can be computed.
Our key idea is to reuse the target labels on the original head nodes for the synthesized tail nodes.
The rationale is that many prediction tasks involve target labels that are inherent node properties that do not change with node degree. For example, additional citations will not change a paper's subject area, and additional purchases will not change a product's category. 


Concretely, for link prediction tasks, \methodname{} directly reuses the original edges $E$ in $G$ (before dropping) for the target supervision on the synthetic tail nodes. 
For semi-supervised node classification, \methodname{} can similarly reuse the target labels of labeled nodes $V_{\rm labeled}$ in $G$ as the labels for synthetic tail nodes in $\widetilde{G}$. 
A critical challenge here is that the number of labeled nodes $V_{\rm labeled}$ is often small in the semi-supervised setting, \eg, 1\%--5\% of all nodes $V$, limiting the amount of target label supervision \methodname{} can reuse. 

To resolve this issue, \methodname{} applies the base GNN (obtained in the first training stage) over $G$ to predict pseudo-labels~\citep{lee2013pseudo} over non-isolated nodes in $V_{\rm unlabeled}\equiv V \setminus V_{\rm labeled}$.\footnote{Note that the pseudo-labels do not need to be ones directly predicted by the base GNN. For example, one can apply C\&S post-processing~\citep{huang2020combining} to improve the quality of the pseudo-labels, which we leave for future work.}
\methodname{} then includes the pseudo-labels as supervision $Y$ in the second stage (L8 in Algorithm~\ref{alg:method}). This significantly increases the size of the supervision $Y$, \eg, by a factor of $\approx$100 if only 1\% of nodes are labeled. 
While the pseudo-labels can be noisy, they are ``best guesses'' made by the base GNN in the sense that they are predicted using \emph{full graph information} $G$ as input.
In the second stage, \methodname{} trains the GNN to maintain its ``best guesses'' \emph{given sparser graph $\widetilde{G}$ as input}, which encourages the GNN to perform well on nodes whose neighbors are actually scarce. 
Note that this strategy is fundamentally different from the classical pseudo-labeling method~\citep{lee2013pseudo} that trains a model \emph{without} sparsifying the input graph.
In the following sections, we will see this both theoretically and empirically.


\section{Theoretical Analysis}
\label{sec:theory}
To theoretically understand \methodname{} with clean insights, we consider node classification with binary labels for a part of a graph with two extreme groups of nodes: ones with full degrees and ones with zero degrees. Considering this part of a graph  as an example, we mathematically show how \methodname{} uses the nodes with high degrees to improve the generalization for the nodes with low degrees via the curriculum-based training strategy.

We analyze the generalization gap between  the test errors of nodes with low degrees and the training errors of nodes with high degrees. This type of generalization is non-standard and  does not necessarily happen unless we take advantage of some additional mechanisms such as dropping edges. Define $d$ to be  the dimensionality of the feature space of the last hidden layer of a GNN. Denote by $m$ the size of a set of labeled nodes used for training. Let  $Q$ be the average training loss at the end of the first stage of \methodname{} curriculum learning.

We prove a theorem (Theorem \ref{thm:1}), which  shows the following three statements: 
\begin{enumerate}[label=(\roman*),leftmargin=0.6cm]
\item 
First, consider \methodname{} without pseudo labeling (denoted by $M_1$). $M_1$ helps reduce the test errors of nodes with low degrees via utilizing the nodes with high degrees by dropping edges: \ie,  the  generalization bound in our theorem decreases towards zero at the rate of $\sqrt\frac{d}{m}$. 
\item 
The full \methodname{} (denoted by $M_2$) further reduces the test errors of nodes with low degrees by utilizing pseudo-labels: \ie,  the rate of$\sqrt\frac{d}{m}$  is replaced by the rate of $\sqrt\frac{1}{m}+Q$, where typically  $Q=0$  as $Q$ is explicitly minimized as a training objective. Thus, curriculum-based  training with pseudo-labels can remove the factor $d$.
\item 
\methodname{} without DropEdge (denoted by $M_3$), \ie, the classical pseudo-labeling method, degrades the test errors of nodes with low degrees by incurring additional error term $\tau>0$ that measures the difference between the losses with and without edges. This is consistent with the above intuition that generalizing from high-degree training nodes to the low-degree test nodes requires some relationship between ones with and without edges.          
\end{enumerate}  

For each method $M \in \{M_i\}_{i=1}^3$, we define $\Delta(M)$ to be the generalization gap between  the test errors of nodes with low degrees and the training errors of nodes with high degrees. 

\begin{theorem} \label{thm:1}
 For any $\delta>0$, with probability at least  $1-\delta$, the following holds for all $M \in \{M_1,M_2,M_{3}\}$: 
 
\begin{align*}
\nonumber \Delta(M) & \le\sqrt{\frac{ \one\{M=M_1\}8( d-1)\ln(\frac{16 e m}{\delta})+8\ln(\frac{16 e m}{\delta})}{m}}
\\ & \quad +\one\{M\neq M_1\} Q+\one\{M= M_{3}\}\tau+G,
\end{align*} 
where  $G \rightarrow 0$ as the graph size approaches infinity.
\end{theorem}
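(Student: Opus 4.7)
The plan is to analyze the three variants $M_1, M_2, M_3$ separately and then observe that the three resulting bounds can be written as a single expression with indicator functions. Let $\theta_M$ be the parameter returned by method $M$, let $R_{\mathrm{low}}(\theta)$ be the expected test loss on low-degree (zero-degree) nodes, and let $\widehat R_{\mathrm{high}}(\theta)$ be the empirical training loss on the labeled high-degree nodes used in the second stage. The quantity to bound is $\Delta(M)=R_{\mathrm{low}}(\theta_M)-\widehat R_{\mathrm{high}}(\theta_M)$. The key structural observation is that DropEdge applied to a full-degree labeled node produces an input that is distributionally (close to) identical to a zero-degree test input, so for $M_1$ and $M_2$ the second-stage empirical loss on high-degree nodes can be reinterpreted as an empirical loss on samples drawn from the low-degree input distribution. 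The residual term $G$ in the statement absorbs the finite-graph gap between empirical and population distributions of nodes, and goes to $0$ as the graph grows.

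\textbf{Case $M_1$ (DropEdge, no pseudo-labels).} Here the hypothesis $\theta_{M_1}$ could be any minimizer in the GNN class. To transfer empirical loss on the $m$ DropEdge-modified high-degree nodes to the test loss on low-degree nodes I would apply a uniform-convergence (Rademacher / VC-type) bound over the effective hypothesis class induced by varying the last hidden layer, which has dimension $d$. A standard covering-number estimate combined with a Hoeffding-type sub-Gaussian concentration yields the $\sqrt{8(d-1)\ln(16em/\delta)/m}$ rate, together with the $\sqrt{8\ln(16em/\delta)/m}$ ``base'' term.

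\textbf{Cases $M_2$ and $M_3$ (with pseudo-labels).} Once the first stage fixes a base GNN with training loss $Q$, the pseudo-labels pin down the supervision signal as the deterministic output of that specific GNN on the full graph. The second stage is then training a single target function rather than searching a $d$-parameter class, so I would replace uniform convergence by a one-hypothesis Hoeffding bound, giving the $\sqrt{8\ln(16em/\delta)/m}$ rate without the $d-1$ factor. The additive $Q$ arises because the pseudo-labels themselves disagree with the true labels by at most the first-stage empirical loss $Q$, and this discrepancy propagates linearly into $\Delta(M)$. For $M_3$ (no DropEdge) the training inputs keep their original edges while the test inputs have none, so I would add the distribution-shift term $\tau:=\EE|\ell(F_{\theta}(G_{\mathrm{full}}),\tilde y)-\ell(F_{\theta}(G_{\mathrm{sparse}}),\tilde y)|$ comparing the loss with and without edges.

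\textbf{Main obstacle and assembly.} Combining the three cases into one inequality is immediate once each is bounded, by multiplying the regime-specific terms by their respective indicators $\one\{M=M_1\}$, $\one\{M\neq M_1\}$, and $\one\{M=M_3\}$. The principal technical obstacle is the first step: rigorously equating the DropEdge-modified high-degree input distribution with the true low-degree test distribution so that the concentration inequalities can be invoked; this requires carefully specifying the node-sampling model, conditioning on the first-stage output (to decouple pseudo-label randomness from the independent DropEdge randomness in the second stage), and absorbing the residual finite-graph mismatch into $G$ so that $G\to 0$ in the infinite-graph limit. A secondary subtlety is that the cardinality-$m$ set used inside the $\ln(16em/\delta)$ factor must be consistent across the three cases, which I handle by taking $m$ to be the number of labeled nodes (the same quantity that drives both the uniform-convergence and the Hoeffding arguments).
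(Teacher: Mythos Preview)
Your high-level decomposition (align the DropEdge-modified high-degree inputs with the low-degree test inputs, handle $M_1$ by uniform convergence, handle $M_2,M_3$ by exploiting pseudo-labels) matches the paper, and your treatment of $M_1$ and of the residual term $G$ is essentially what the paper does. However, your argument for why the factor $d$ disappears in the $M_2,M_3$ bounds has a genuine gap.

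You write that ``the second stage is then training a single target function rather than searching a $d$-parameter class, so I would replace uniform convergence by a one-hypothesis Hoeffding bound.'' This does not work: fixing the \emph{labels} (pseudo-labels) does not fix the \emph{hypothesis}. In the second stage you still learn $\theta_{M_2}$ by searching over the full GNN class on the sparse-input data, and $\Delta(M_2)$ compares the resulting $\theta_{M_2}$'s test loss (true labels, zero-degree inputs) against its training loss (pseudo-labels, DropEdge inputs). Any concentration argument that moves from the training set to the population must therefore be uniform over $\theta$, and a naive bound would again pick up the $d$ factor.

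The paper removes $d$ by a different mechanism. It first bounds the true-vs-pseudo discrepancy pointwise: for each $i\in B$, $\ell(i)-\tilde\ell(i)\le \mathcal L_1(i)$, where $\mathcal L_1$ is the first-stage $0$--$1$ loss evaluated on the \emph{original} graph $G$. One then needs to pass from $\frac{1}{|B|}\sum_{i\in B}\mathcal L_1(i)$ to $\frac{1}{|S|}\sum_{i\in S}\mathcal L_1(i)=Q$. This is again a uniform-convergence step (since $\hat\theta$ from the first stage was chosen using $S$), but now the relevant losses are computed with the \emph{full} graph on the full-degree block $B$: every node in $B$ is connected to every other, so the aggregation $JH$ restricted to $B$ has identical rows and is rank~$1$. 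Applying the GNN generalization bound of Esser et al.\ with this rank-$1$ aggregation yields $\sqrt{8\ln(16e|S|/\delta)/|S|}$ with no $d$. In short, the $d\to 1$ improvement comes from the graph structure (full-degree $\Rightarrow$ rank-$1$ aggregation) used in the first-stage evaluation, not from the pseudo-labels collapsing the hypothesis class. Your plan would go through if you replace the ``one-hypothesis Hoeffding'' step by this pointwise bound plus the rank-$1$ uniform-convergence argument on the original graph.
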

\begin{proof}
A more detailed version of Theorem \ref{thm:1} is presented along with the complete proof in Appendix. 
\end{proof}

\section{Related Work}
\label{sec:related}
\subsection{Methods for Tail Nodes}
Recently, a surge of methods have been developed for improving the predictive performance of GNNs on tail nodes~\citep{liu2021tail,zheng2021cold,kang2022rawlsgcn,zhang2021graph}.
These methods augment GNNs with complicated tail-node-specific architectural components and losses, while \methodname{} focuses on the training strategy that does not require any architectural nor loss modification.  


\subsection{Data augmentation for GNNs}
The second stage of \methodname{} is data augmentation over graphs, on which there has been abundant work~\citep{zhao2021data,feng2020graph,verma2021graphmix,kong2020flag,liu2022local,ding2022data}.
The most relevant one is DropEdge~\citep{rong2019dropedge}, which was originally developed to overcome the over-smoothing issue of GNNs~\citep{li2018deeper} specific to semi-supervised node classification.
Our work has a different motivation and expended scope: We use DropEdge to synthesize tail node inputs and consider a wider range of prediction tasks. Our theoretical analysis also differs and focuses on generalization on tail nodes.
Methodologically, \methodname{} additionally employs curriculum learning and pseudo-labels, both of which are crucial in improving GNN performance over the vanilla DropEdge.

\section{Experiments}
\label{sec:experiments}
We evaluate the broad applicability of \methodname{} by considering five GNN models and testing them on the three prediction tasks (semi-supervised node classification, link prediction, and recommender systems) with three predictive settings: transductive, inductive, and cold-start inductive predictions. 

\subsection{Experimental Settings}
We evaluate \methodname{} on realistic tail node scenarios in both transductive (\ie, naturally occurring tail nodes in scale-free networks) and inductive (\ie, newly arrived cold-start nodes) settings.
Conventional experimental setups~\citep{hu2020open,wang2019neural} are not suitable for evaluating \methodname{} as they fail to provide either (1) transductive prediction settings with tail nodes,\footnote{Recommender system benchmarks are processed with the 10-core algorithm to eliminate cold-start users and items~\citep{wang2019neural}.} or (2) inductive cold-start prediction settings.
Therefore, we split the original realistic graph datasets~\citep{hu2020open,wang2019neural} to simulate both (1) and (2) in a realistic manner.
Below, we describe the split for each task type.
The dataset statistics are summarized in Table~\ref{tab:dataset_stats}.

\begin{table}[t]
\centering
    \captionof{table}{Statistics of nodes used for the transductive evaluation. For link prediction and recommender system graphs (user-item bipartite graphs), we only evaluate on nodes/users with at least one edge in the validation set. See Appendix for the description of the datasets.}
    \label{tab:dataset_stats}
    \resizebox{\linewidth}{!}{
    \begin{tabular}{llrrr}
      \toprule
       \textbf{Task} & \textbf{Dataset} & \textbf{\#Nodes} & \textbf{Avg deg.} & \textbf{Feat. dim}\\
      \midrule
        Node & arxiv & 143,941 & 12.93 & 128 \\
        classification  & products & 2,277,597 & 48.01 & 100 \\
          \midrule
        Link & flickr  & 82,981 & 4.81 & 500 \\
         prediction & arxiv  & 141,917 & 7.20 & 128 \\
         \midrule
        \mr{2}{Recsys} & gowalla & 29,858 & 3.44 & -- \\
        & amazon-book & 52,643 & 5.67 & -- \\ 
      \bottomrule
    \end{tabular}
    }
\end{table}

\subsection{Semi-supervised node classification} Given all nodes in the original dataset, we randomly selected 95\% of the nodes and used their induced subgraph as the graph $G = (V, E)$ to train GNNs. The remaining 5\% of nodes, $V_{\rm new}$, is used for inductive test prediction.
Within $V$, 10\% and 2\% of the nodes are used as labeled nodes $V_{\rm labeled}$ for arxiv and products, respectively. Half of $V_{\rm labeled}$ is used to compute the supervised training loss, and the other half is used as the transductive validation set for selecting hyperparameters.
We used classification accuracy for the evaluation metric.
For the transductive performance, we report the test accuracy on the unlabeled test nodes $V_{\rm unlabeled} \equiv V \setminus V_{\rm labeled}$, while for the inductive performance, we report the test accuracy on $V_{\rm new}$.
For the inductive test prediction, we also consider the \emph{cold-start} scenario, where certain portions (30\%, 60\%, and 90\%) of edges are randomly removed from the new nodes.

\begin{figure*}
\centering
    \captionof{table}{The improvement with \methodname{} over the base GNNs for five diverse GNN model architectures. We used the same datasets as Figure~\ref{fig:base_comp}. For semi-supervised node classification, ``inductive (cold)'' randomly removed 90\% of edges from the new nodes, while for link prediction, 60\% were removed.
    $^\dagger$For semi-supervised node classification, GAT gave Out-Of-Memory (OOM) on the products dataset, so we report the performance on arxiv instead.}
   \label{tab:gnn_model_table}
    \resizebox{\linewidth}{!}{
\begin{tabular}{l|l|l|ccccc}
      \toprule
          Task & Config & Setting & SAGE & GCN & SAGE-max & SAGE-sum & GAT \\ \midrule
           & \mr{2}{Transductive} &  Base & 0.8409\std{0.0006} & 0.8432\std{0.0007} & 0.8132\std{0.0004} & 0.7611\std{0.0030} & OOM / 0.6862\std{0.0023}$^\dagger$  \\
          Semi-sup & & \methodname{} & \textbf{0.8552\std{0.0003}} & \textbf{0.8523\std{0.0007}} & \textbf{0.8373\std{0.0008}} & \textbf{0.7612\std{0.0030}} & OOM / \textbf{0.6973\std{0.0015}}$^\dagger$  \\ 
          node & \mr{2}{Inductive} & Base & 0.8425\std{0.0006} & 0.8447\std{0.0008} & 0.8129\std{0.0012} & 0.7610\std{0.0024} & OOM / 0.6800\std{0.0024}$^\dagger$  \\
          classification & & \methodname{} & \textbf{0.8562\std{0.0005}} & \textbf{0.8536\std{0.0006}} & \textbf{0.8374\std{0.0013}} & \textbf{0.7616\std{0.0029}} & OOM / \textbf{0.6930\std{0.0013}} \\ 
          (products) & \mr{2}{Inductive (cold)} & Base & 0.7227\std{0.0011} & 0.7461\std{0.0033} & 0.6907\std{0.0007} & 0.5331\std{0.0078} & OOM / 0.5405\std{0.0034}$^\dagger$  \\  
          & & \methodname{} & \textbf{0.8054\std{0.0011}} & \textbf{0.7924\std{0.0050}} & \textbf{0.7868\std{0.0012}} & \textbf{0.5366\std{0.0129}} & OOM / \textbf{0.5966\std{0.0053}}$^\dagger$  \\  \midrule
             & \mr{2}{Transductive} & Base & 0.1371\std{0.0028} & 0.2242\std{0.0005} & 0.1697\std{0.0024} & 0.0761\std{0.0010} & 0.2363\std{0.0016} \\
          Link & & \methodname{} & \textbf{0.2161\std{0.0020}} & \textbf{0.2527\std{0.0017}} & \textbf{0.2489\std{0.0027}} & \textbf{0.1209\std{0.0108}} & \textbf{0.2648\std{0.0033}} \\ 
          prediction & \mr{2}{Inductive} & Base & 0.1227\std{0.0042} & 0.2052\std{0.0005} & 0.1484\std{0.0012} & 0.0684\std{0.0015} & 0.2020\std{0.0034} \\ 
          (arxiv) & & \methodname{} & \textbf{0.1807\std{0.0044}} & \textbf{0.2239\std{0.0027}} & \textbf{0.2141\std{0.0020}} & \textbf{0.1060\std{0.0083}} & \textbf{0.2335\std{0.0033}}  \\ 
          & \mr{2}{Inductive (cold)} & Base & 0.0688\std{0.0020} & 0.1185\std{0.0011} & 0.0992\std{0.0041} & 0.0508\std{0.0012} & 0.1273\std{0.0024} \\ 
          & & \methodname{} & \textbf{0.1241\std{0.0025}} & \textbf{0.1428\std{0.0021}} & \textbf{0.1559\std{0.0030}} & \textbf{0.0785\std{0.0046}} & \textbf{0.1580\std{0.0033}}  \\ \midrule
          
          Recsys & \mr{2}{Transductive} & Base & 0.0847\std{0.0006} & 0.0901\std{0.0004} & 0.0858\std{0.0006} & 0.0761\std{0.0010} & 0.0803\std{0.0005} \\
            (gowalla)            &  & \methodname{} & \textbf{0.1025\std{0.0018}} & \textbf{0.1094\std{0.0007}} & \textbf{0.1055\std{0.0025}} & \textbf{0.1028\std{0.0012}} & \textbf{0.0822\std{0.0006}} \\
      \bottomrule
    \end{tabular}}
\end{figure*}

\subsection{Link prediction}
We follow the same protocol as above to obtain transductive nodes $V$ and inductive nodes $V_{\rm new}$.
For transductive evaluation, we randomly split the edges $E$ into training/validation/test sets with the ratio of 50\%/20\%/30\%~\citep{zhang2018link,you2021identity}.
For inductive evaluation, we randomly split the edges from the new nodes $V_{\rm new}$ into training/test edges with a ratio of 50\%/50\%. During the inductive inference time, the training edges are used as input to GNNs, with the GNN parameters fixed.

For the evaluation metric, we use the recall@50~\citep{wang2019neural}, where the positive target nodes are scored among all negative nodes.\footnote{Our evaluation protocol is more realistic~\citep{krichene2020sampled} than the OGB link prediction datasets that evaluate each positive edge among randomly selected edges~\citep{hu2020open} .}
We use the validation recall@50 averaged over $V$ to tune hyper-parameters.
For the transductive performance, we report the test recall@50 averaged over the $V$, while for inductive performance, we report the recall@50 averaged over $V_{\rm new}$. For the inductive setting, we also consider the cold-start scenario, as we have described in the semi-supervised node classification.

\subsection{Recommender systems}
For recommender systems, we noticed that widely-used benchmark datasets were heavily processed to eliminate all tail nodes, \eg, via the 10-core algorithm~\citep{wang2019neural}. As a result, the conventional 80\%/10\%/10\% train/validation/test split gives the median training interactions per user of 17 and 27 for gowalla and amazon-book, respectively, which do not reflect the realistic use case that involves cold-start users and items~\citep{lika2014facing}.
To reflect the realistic use case, we use a smaller training edge ratio on top of the existing benchmark datasets. Specifically, we randomly split the edges in the original graph into training/validation/test edges with a 10\%/5\%/85\% ratio.
We use the same evaluation metric and protocol as link prediction, except that we do not consider the inductive setting in recommender systems due to the absence of input node features.

\begin{table*}[t]
    \centering
        \caption{Semi-supervised node classification performance with GraphSAGE as the backbone architecture. The metric is classification accuracy. For the ``Inductive (cold)'', 90\% of edges are randomly removed from the new nodes. For the results with other edge removal ratios, refer to Table~\ref{tab:cold_nodecls_sage_table} in Appendix.
        Refer to Table \ref{tab:nodecls_gcn_table} in Appendix for the performance with GCN, where a similar trend is observed. } 
    \label{tab:nodecls_sage_table}
    \renewcommand{\arraystretch}{1.0}
    \setlength{\tabcolsep}{5pt}
     \resizebox{\linewidth}{!}{
    \begin{tabular}{l|ccc|ccc}
      \toprule
     \multirow{2}{*}{\textbf{Method}} & \multicolumn{3}{c|}{\textbf{arxiv}} & \multicolumn{3}{c}{\textbf{products}} \\  \cmidrule{2-7}
        & \textbf{Transductive}  & \textbf{Inductive} & \textbf{Inductive (cold)} & \textbf{Transductive} & \textbf{Inductive} & \textbf{Inductive (cold)} \\
      \midrule
Base & 0.6738\std{0.0007} & 0.6686\std{0.0005} & 0.4752\std{0.0061} & 0.8409\std{0.0006} & 0.8425\std{0.0006} & 0.7227\std{0.0011} \\ 
DropEdge & 0.6756\std{0.0013} & 0.6690\std{0.0032} & 0.5449\std{0.0059} & 0.8464\std{0.0006} & 0.8472\std{0.0006} & 0.7709\std{0.0014} \\ 
LocalAug & 0.6830\std{0.0007} & \textbf{0.6768\std{0.0010}} & 0.4981\std{0.0018} & 0.8445\std{0.0004} & 0.8461\std{0.0004} & 0.7261\std{0.0008} \\ 
ColdBrew & 0.6726\std{0.0007} & 0.6487\std{0.0007} & 0.5082\std{0.0018} & 0.8374\std{0.0005} & 0.8382\std{0.0004} & 0.7395\std{0.0019} \\ 
GraphLessNN & 0.6076\std{0.0009} & 0.5456\std{0.0008} & 0.5456\std{0.0008} & 0.6678\std{0.0007} & 0.6648\std{0.0009} & 0.6648\std{0.0009} \\ 
Tail-GNN & 0.6614\std{0.0013} & 0.6548\std{0.0011} & 0.5388\std{0.0031} & OOM & OOM & OOM \\ 
\midrule
\methodname{} w/o curriculum & 0.6753\std{0.0014} & 0.6682\std{0.0020} & 0.5472\std{0.0119} & 0.8458\std{0.0005} & 0.8467\std{0.0008} & 0.7569\std{0.0015} \\ 
\methodname{} w/o pseudo-labels & 0.6745\std{0.0007} & 0.6672\std{0.0019} & 0.5332\std{0.0077} & 0.8462\std{0.0005} & 0.8472\std{0.0008} & 0.7631\std{0.0055} \\ 
\methodname{} w/o syn-tails & 0.6787\std{0.0008} & \textbf{0.6760\std{0.0006}} & 0.4899\std{0.0047} & 0.8436\std{0.0003} & 0.8451\std{0.0003} & 0.7258\std{0.0011} \\ 
\textbf{\methodname{} (ours)} & \textbf{0.6872\std{0.0008}} & \textbf{0.6779\std{0.0026}} & \textbf{0.5996\std{0.0012}} & \textbf{0.8552\std{0.0003}} & \textbf{0.8562\std{0.0005}} & \textbf{0.8054\std{0.0011}} \\ 
\midrule
Rel. gain over base & +2.0\% & +1.4\% & +26.2\% & +1.7\% & +1.6\% & +11.4\% \\ 
    \bottomrule
    \end{tabular}
    }
 
\end{table*}

\subsection{Baselines and Ablations}
\label{subsec:baseline}
We compared \methodname{} against the following baselines.
\begin{itemize}[noitemsep,nolistsep,leftmargin=*]
    \item \textbf{Base}: Trains a GNN with the conventional strategy, \ie, L2--5 of Algorithm~\ref{alg:method}. Note that our pseudo-labels are produced by this base GNN.
    \item \textbf{DropEdge}~\citep{wang2019neural}: Randomly drops edges during training, \ie, L11--15 of Algorithm~\ref{alg:method}.
    \item \textbf{Local augmentation (LocalAug)}~\citep{liu2022local}: Uses a conditional generative model to generate neighboring node features and use them as additional input to a GNN.
    \item \textbf{ColdBrew}~\citep{zheng2021cold}: Distills head node embeddings computed by the base GNN into an MLP. Uses the resulting MLP to obtain higher-quality tail node embeddings.
    \item \textbf{GraphLessNN}~\citep{zhang2021graph}: Distills the pseudo-labels predicted by the base GNN into an MLP. Uses the resulting MLP to make prediction.
    \item \textbf{Tail-GNN}~\citep{liu2021tail}: Adds a tail-node-specific component inside the original GNN.
    \item \textbf{RAWLS-GCN}~\citep{kang2022rawlsgcn}: Modifies the GCN's adjacency matrix to be doubly stochastic (\ie, all rows and columns sum to 1). 
\end{itemize}
Note that GraphLessNN is only applicable for node classification. LocalAug and ColdBrew require input node features to be available; hence, they are not applicable to recommender systems. RAWLS-GCN is only applicable to the GCN architecture.

In addition to the existing baselines, we consider the following three direct ablations of \methodname{}.
\begin{itemize}[noitemsep,nolistsep,leftmargin=*]
    \item \textbf{\methodname{} w/o curriculum}: Interleaves the first stage prediction (L3 in Algorithm~\ref{alg:method}) and the second stage prediction (L12--13 in Algorithm~\ref{alg:method}) in every parameter update. It is close to \methodname{} except that it does not follow the two-stage curriculum learning strategy. 
    \item \textbf{\methodname{} w/o syn-tails}: No L12 in Algorithm~\ref{alg:method}.
    \item \textbf{\methodname{} w/o pseudo-labels}: No L8 in Algorithm~\ref{alg:method}.
\end{itemize}
Another possible ablation, \methodname{} w/o the first stage training (\ie, only performing the second stage training of L2--5 in Algorithm~\ref{alg:method}), is covered as DropEdge in our experiments.

\begin{table*}[t]
    \centering
        \caption{Link prediction performance with GraphSAGE as the backbone architecture. The metric is recall@50. For the ``Inductive (cold)'', 60\% of edges are randomly removed from the new nodes. For other edge removal ratios, refer to Table \ref{tab:cold_linkpred_sage_table} in Appendix, where \methodname{} consistently outperforms the baselines. Refer to Table \ref{tab:linkpred_gcn_table} in Appendix for the performance with GCN, where we see a similar trend. }
         
    \label{tab:linkpred_sage_table}
    \renewcommand{\arraystretch}{1.0}
    \setlength{\tabcolsep}{5pt}
    \resizebox{\linewidth}{!}{
    \begin{tabular}{l|ccc|ccc}
      \toprule
     \mr{2}{\textbf{Method}} & \mc{3}{c|}{\textbf{flickr}}  & \mc{3}{c}{\textbf{arxiv}} \\ \cmidrule{2-7}
         & \textbf{Transductive} & \textbf{Inductive} & \textbf{Inductive (cold)} & \textbf{Transductive} & \textbf{Inductive} & \textbf{Inductive (cold)} \\
        \midrule
Base & 0.1023\std{0.0019} & 0.1012\std{0.0034} & 0.0582\std{0.0014} & 0.1371\std{0.0028} & 0.1227\std{0.0042} & 0.0688\std{0.0020} \\ 
DropEdge & 0.1359\std{0.0020} & 0.1283\std{0.0015} & 0.0992\std{0.0008} & 0.2109\std{0.0049} & 0.1748\std{0.0039} & 0.1189\std{0.0046} \\ 
LocalAug & 0.1073\std{0.0030} & 0.1089\std{0.0028} & 0.0646\std{0.0059} & 0.1434\std{0.0048} & 0.1269\std{0.0044} & 0.0734\std{0.0036} \\ 
ColdBrew & 0.0716\std{0.0062} & 0.0700\std{0.0070} & 0.0369\std{0.0045} & 0.1242\std{0.0047} & 0.1103\std{0.0051} & 0.0640\std{0.0031} \\ 
Tail-GNN & 0.0790\std{0.0022} & 0.0712\std{0.0026} & 0.0657\std{0.0016} & 0.1007\std{0.0035} & 0.0847\std{0.0032} & 0.0586\std{0.0031} \\ 
\midrule
\methodname{} w/o curriculum & 0.1406\std{0.0005} & 0.1322\std{0.0010} & 0.1014\std{0.0018} & 0.2064\std{0.0050} & 0.1725\std{0.0058} & 0.1144\std{0.0041} \\ 
\methodname{} w/o syn-tails & 0.1015\std{0.0018} & 0.0997\std{0.0033} & 0.0583\std{0.0013} & 0.1412\std{0.0032} & 0.1259\std{0.0028} & 0.0728\std{0.0032} \\ 
\textbf{\methodname{} (ours)} & \textbf{0.1464\std{0.0033}} & \textbf{0.1384\std{0.0040}} & \textbf{0.1119\std{0.0069}} & \textbf{0.2161\std{0.0020}} & \textbf{0.1807\std{0.0044}} & \textbf{0.1241\std{0.0025}} \\ 
\midrule
Rel. gain over base & +43.2\% & +36.8\% & +92.2\% & +57.6\% & +47.4\% & +80.4\% \\ 
    \bottomrule
    \end{tabular}
    }

\end{table*}

\begin{figure}
\centering
    \captionof{table}{Transductive performance on the recommender systems datasets. The metric is recall@50. }%
   \label{tab:recsys_table}
    \resizebox{1.05\linewidth}{!}{
\begin{tabular}{l|cc|cc}
      \toprule
          \mr{2}{\textbf{Method}} & \mc{2}{c|}{\textbf{gowalla}}  &  \mc{2}{c}{\textbf{amazon-book}} \\ 
          \cmidrule{2-5}
          & SAGE & GCN  &  SAGE & GCN \\
          \midrule
 Base & 0.0847\std{0.0006} & 0.0901\std{0.0004} & 0.0545\std{0.0003} & 0.0527\std{0.0001} \\ 
 DropEdge & 0.0827\std{0.0002} & 0.0814\std{0.0004} & 0.0525\std{0.0011} & 0.0539\std{0.0005} \\ 
 RAWLS-GCN & -- & 0.0625\std{0.0005} & -- & 0.0469\std{0.0002} \\ 
 Tail-GNN & 0.0791\std{0.0005} & 0.0777\std{0.0011} & 0.0550\std{0.0002} & 0.0518\std{0.0005} \\ 
 \midrule
 \methodname{} w/o curriculum & 0.0834\std{0.0005} & 0.0857\std{0.0042} & 0.0537\std{0.0001} & 0.0525\std{0.0005} \\ 
 \methodname{} w/o syn-tails & 0.0847\std{0.0006} & 0.0904\std{0.0004} & 0.0546\std{0.0003} & 0.0530\std{0.0002} \\ 
 \textbf{\methodname{} (ours)} & \textbf{0.1025\std{0.0018}} & \textbf{0.1094\std{0.0007}}  & \textbf{0.0558\std{0.0007}} & \textbf{0.0618\std{0.0003}} \\ 
 \midrule
 Rel. gain over base & +21.1\% & +21.4\%  & +2.3\% & +17.3\% \\ 
      \bottomrule
    \end{tabular}}
\end{figure}

\subsection{GNN Model Architectures}

We mainly experimented with two classical yet strong GNN models: the mean-pooling variant of GraphSAGE (or SAGE for short)~\citep{hamilton2017inductive} and GCN~\citep{kipf2016semi}.
In Table~\ref{tab:gnn_model_table}, we additionally experimented with the max- and sum-pooling variants of GraphSAGE as well as the Graph Attention Network (GAT)~\citep{velivckovic2017graph}.
In total, the experimented GNN architectures cover diverse aggregation schemes of mean, renormalized-mean~\citep{kipf2016semi}, max, sum, and attention, which are also building blocks of more recent GNN architectures~\citep{corso2020principal,shirunimp,you2020graph,wu2019simplifying,rossi2020sign,li2018deeper,you2020design}.

\subsection{Hyperparameters}
\label{subsec:hyper-parameters}
We used three-layer GNNs and the Adam optimizer~\citep{kingma2014adam} for all GNN models and datasets, which we found to perform well in our preliminary experiments. 
For all methods, we performed early stopping and selected hyperparameters based on the transductive validation performance. For the drop edge ratio $\alpha$, we selected it from [0.25, 0.5, 0.75] for all datasets and methods. We repeated all experiments with five training seeds to report the mean and standard deviation.
More details are described in Appendix \ref{app:hyperparams}.

\subsection{Results}
We first compare \methodname{} against the base GNNs trained with the conventional strategy.
Table~\ref{tab:gnn_model_table} summarizes the results across the three different prediction tasks, five diverse GNN architectures, and both transductive and inductive (cold-start) settings.
We see that \methodname{} improves the predictive performance of GNNs across the settings, indicating its general usefulness in training GNNs. One minor exception is the sum aggregation in semi-supervised node classification, but the sum aggregation is non-standard in semi-supervised classification anyway due to the poor model performance and inappropriate inductive bias~\citep{wu2019simplifying}. 

We also analyze the degree-specific predictive improvement and highlight the results in Figure~\ref{fig:base_comp}. 
The full results (the five GNN architectures times the six datasets) are available in Figures~\ref{fig:nodecls_comp}, \ref{fig:linkpred_comp}, and \ref{fig:recsys_comp} in Appendix.
We see that \methodname{} produces consistent improvement over the base GNNs across the node degrees. Not surprisingly, improvement is most significant on tail nodes. 

We then focus on the two representative GNNs (GraphSAGE and GCN) and provide extensive results in Tables \ref{tab:nodecls_sage_table}--\ref{tab:recsys_table}.
Overall,  \methodname{} establishes its superior performance over the existing strong baseline methods by outperforming the graph augmentation methods (DropEdge and LocalAug) as well as the specialized methods for tail nodes (ColdBrew, GraphLessNN, and Tail-GNN) \emph{across the three different tasks}. In particular, from the ``Inductive (cold)'' column, we see that \methodname{} gives superior performance than ColdBrew, GraphLessNN, and Tail-GNN on the cold-start tail nodes, despite its simplicity and not requiring any loss/architectural change.

Moreover, \methodname{} outperforms \methodname{} w/o curriculum, which highlights the importance of the two-stage curriculum learning strategy in \methodname{}. \methodname{} also outperforms \methodname{} w/o syn-tails and \methodname{} w/o pseudo-labels, which suggests that both of the ablated components are necessary, as predicted by our theory. \methodname{} is the only method that yielded consistent improvement over the base GNNs, indicating its broad applicability across the prediction tasks.
More detailed discussion can be found in Appendix.

\section{Conclusions}
\label{sec:conclusion}

We presented \methodname{}, a simple two-stage curriculum learning strategy for improving GNN performance, especially on tail nodes. 
\methodname{} is simple to implement, does not require any modification to loss or model architecture, and can be used with a wide range of GNN architectures.
Through extensive experiments, we demonstrated the effectiveness of \methodname{} in diverse settings, including five GNN architectures, three types of prediction tasks, and three settings (transductive, inductive, and cold-start).
Overall, our work suggests that even a simple training strategy can significantly improve the predictive performance of GNNs and complement parallel advances in model architectures and losses.

\section{Acknowledgments}
We thank Rajas Bansal for discussion. We also thank Camilo Ruiz and Qian Huang for providing  feedback on our manuscript. Our codebase is built using Pytorch~\citep{paszke2019pytorch} and Pytorch Geometric~\citep{fey2019fast}.
Weihua Hu is supported by Funai Overseas Scholarship and Masason Foundation Fellowship. 
We also gratefully acknowledge the support of
DARPA under Nos. HR00112190039 (TAMI), N660011924033 (MCS);
ARO under Nos. W911NF-16-1-0342 (MURI), W911NF-16-1-0171 (DURIP);
NSF under Nos. OAC-1835598 (CINES), OAC-1934578 (HDR), CCF-1918940 (Expeditions), 
NIH under No. 3U54HG010426-04S1 (HuBMAP),
Stanford Data Science Initiative, 
Wu Tsai Neurosciences Institute,
Amazon, Docomo, GSK, Hitachi, Intel, JPMorgan Chase, Juniper Networks, KDDI, NEC, and Toshiba.

The content is solely the responsibility of the authors and does not necessarily represent the official views of the funding entities.

\bibliography{reference}

\newpage

\appendix
\onecolumn
\section{Details of Datasets}
\label{app:datasets}

\subsection{Semi-supervised node classification}
We used the following two datasets:
\begin{itemize}[noitemsep,nolistsep,leftmargin=*]
    \item \textbf{arxiv}~\citep{hu2020open}: Given a paper citation network, the task is to predict the subject areas of the papers.
    Each paper has abstract words as its feature. 
    \item \textbf{products}~\citep{hu2020open}: Given a product co-purchasing network, the task is to predict the categories of the products. Each product has the product description as its feature. 
\end{itemize}

\subsection{Link prediction}
We used the following two datasets:
\begin{itemize}[noitemsep,nolistsep,leftmargin=*]
    \item \textbf{flickr}~\citep{zeng2019graphsaint}: Given an incomplete image-image common-property (\eg, same geographic location, same gallery, comments by the same user, etc.) network, the task is to predict the new common-property links between images. Each image has its description has its feature.
    \item \textbf{arxiv}~\citep{hu2020open}: Given an incomplete paper citation network, the task is to predict the additional citation links. Each paper has words in its abstract as its feature.
\end{itemize}
\vspace{0.5cm}

\subsection{Recommender systems}
We used the following two datasets:
\begin{itemize}[noitemsep,nolistsep,leftmargin=*]
    \item \textbf{gowalla}~\citep{liang2016modeling,wang2019neural}: Given a user-location check-in bipartite graph, the task is to predict new check-in of users.
    \item \textbf{amazon-book}~\citep{he2016ups,wang2019neural}: Given user-product reviews, the task is to predict new reviews by users.
\end{itemize}

\section{Details of Hyperparameters}
\label{app:hyperparams}

Here we present the details of hyperparameters we used in our experiments.

\subsection{Semi-supervised node classification} We used the hidden dimensionality of 256 and 64 for arxiv and products, respectively. We trained GNNs in a full-batch manner, and for products, we used the reduced dimensionality of 64 so that the entire graph fits into the limited GPU memory of 45GB. Mini-batch training is left for future work. 
We used 1500 epochs for both default training and fine-tuning. The learning rate is set to 0.001. 

\subsection{Link prediction}
We used the hidden dimensionality of 256 for all datasets. We added L2 regularization on the node embeddings and tuned its weight for each dataset and GNN architecture.
For both default training and fine-tuning, we used 1000 epochs and a learning rate of 0.0001.

\subsection{Recommender systems}
We used the shallow embedding dimensionality of 64 and the hidden embedding dimensionality of 256. Similar to link prediction, we added L2 regularization to the node embeddings and tuned its weight for each dataset and GNN architecture.
For default training, we trained the model for 2000 epochs with an initial learning rate of 0.001, which is multiplied by 0.1 at the 1000th and 1500th epoch. For fine-tuning, we used 500 epochs with a learning rate of 0.0001.

For training strategies without curriculum learning, we used the same configuration as the default training.

\section{Detailed Discussion on Experimental Results}
Below, we provide a detailed discussion of our experimental results.
\begin{itemize}[noitemsep,nolistsep,leftmargin=*]
    \item The last rows of Tables \ref{tab:nodecls_sage_table}, \ref{tab:linkpred_sage_table}, and \ref{tab:recsys_table} highlight the relative improvement of \methodname{} over the base GNNs. \methodname{} improves over the base GNNs across the transductive settings, giving up to 2.0\%, 57.6\%, and 21.1\% relative improvement in the semi-supervised node classification, link prediction, and recommender systems, respectively. Moreover, \methodname{} gave even larger improvements on the challenging cold-start inductive prediction setting, yielding up to 26.2\% and 92.2\% relative improvement on node classification and link prediction, respectively. 
    \item In Tables~\ref{tab:cold_nodecls_sage_table} and  \ref{tab:cold_linkpred_sage_table}, we show the results of the challenging cold-start inductive prediction with the three different edge removal ratios from the new nodes. From the last rows of the tables, we see that \methodname{} provides larger relative gains on larger edge removal ratios, demonstrating its high effectiveness on the highly cold-start prediction setting.
    \item On semi-supervised node classification (Tables \ref{tab:nodecls_sage_table} and \ref{tab:cold_nodecls_sage_table}), \methodname{} w/o syn-tails gave limited performance improvement, indicating that the conventional semi-supervised training with the pseudo-labels~\citep{lee2013pseudo} is not as effective. Moreover, \methodname{} significantly outperforms \methodname{} w/o pseudo-labels, especially in the inductive cold-start scenario, suggesting the benefit of pseudo-labels in increasing the supervised tail node data.
    \item On link prediction (Tables \ref{tab:linkpred_sage_table} and \ref{tab:cold_linkpred_sage_table}), DropEdge (\methodname{} without the first stage) already gave significant performance improvement over the base GNN. This implies the unrealized potential of DropEdge on this task, beyond mitigating oversmoothing in node classification~\citep{rong2019dropedge}.
    Nonetheless, \methodname{} still gave consistent improvement over DropEdge, suggesting the benefit of the two-stage training.
    \item On recommender systems (Table \ref{tab:recsys_table}), \methodname{} is the only method that produced significantly better performance than the base GNN. DropEdge and \methodname{} w/o curriculum performed worse than the base GNN. This may be because jointly learning the GNN and shallow embeddings is hard without the two-stage training.
\end{itemize}

\begin{figure*}[b]
\centering
\includegraphics[width=1\linewidth]{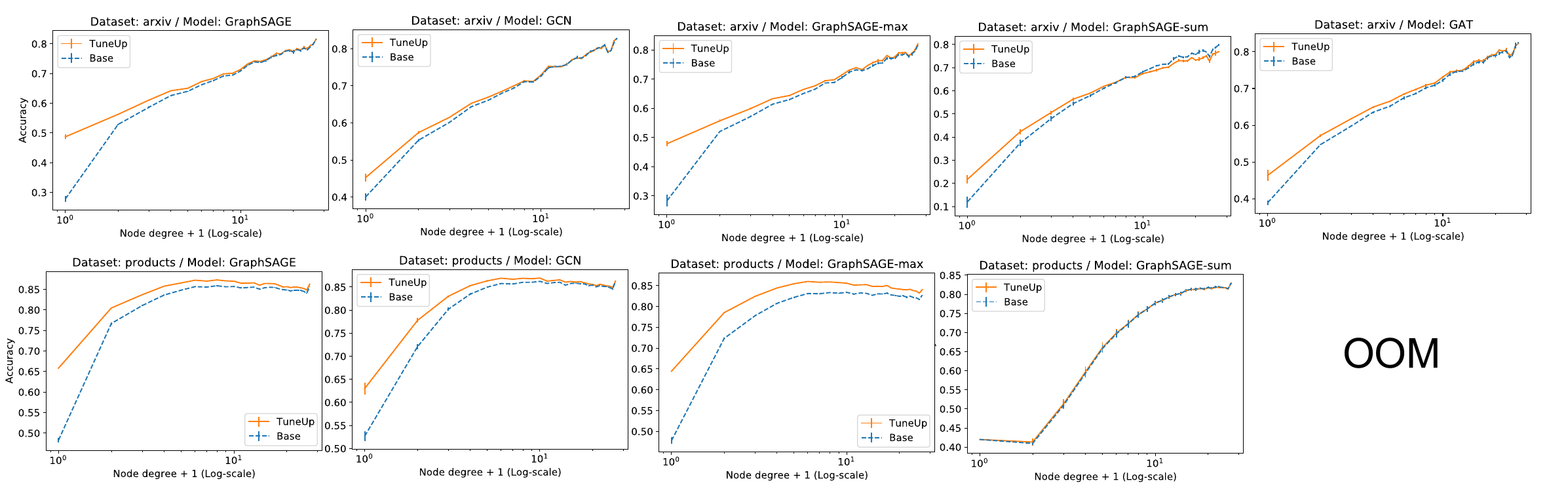}
\caption{
Degree-specific predictive performance of the base GNN and \methodname{} in transductive semi-supervised node classification. The evaluation metric is classification accuracy.
}
\label{fig:nodecls_comp}
\end{figure*}

\begin{figure*}
\centering
\includegraphics[width=1\linewidth]{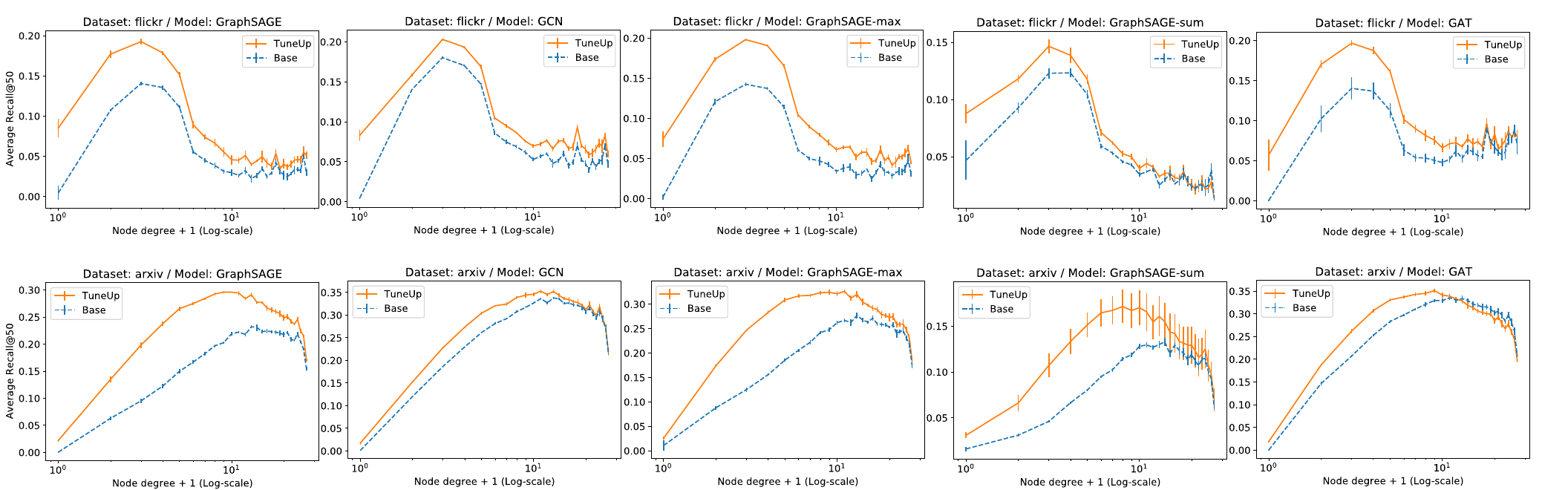}
\caption{
Degree-specific predictive performance of the base GNN and \methodname{} in transductive link prediction. The evaluation metric is recall@50.
}
\label{fig:linkpred_comp}
\end{figure*}

\begin{figure*}
\centering
\includegraphics[width=1\linewidth]{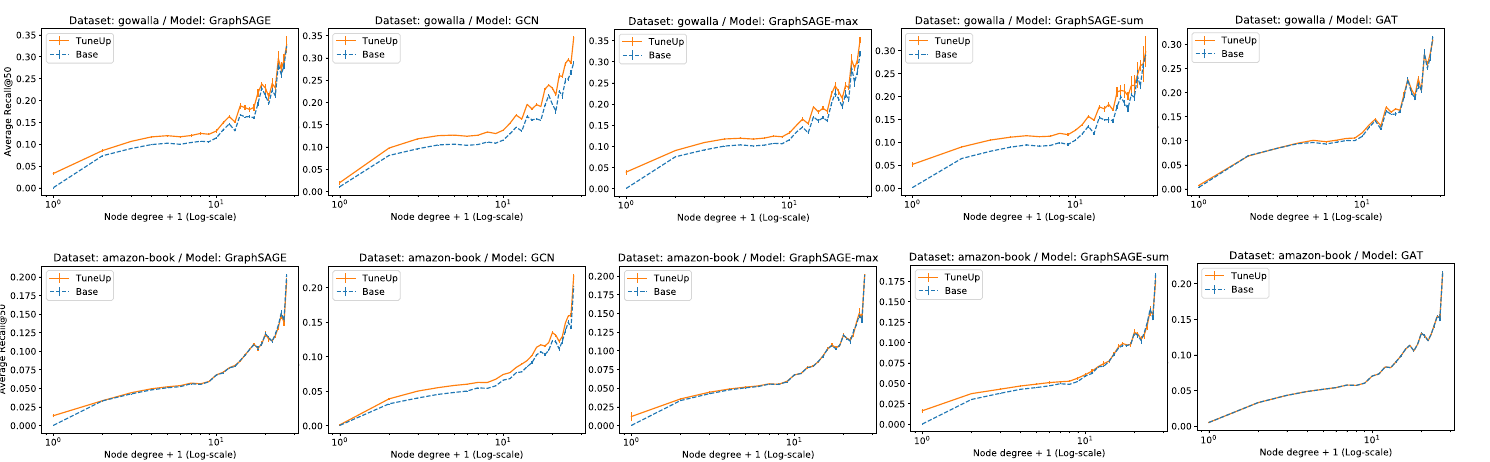}
\caption{
Degree-specific predictive performance of the base GNN and \methodname{} in transductive recommender systems. The evaluation metric is recall@50.
}
\label{fig:recsys_comp}
\end{figure*}

\begin{table*}[t]
    \centering
        \caption{Semi-supervised node classification performance with GCN as the backbone architecture. The evaluation metric is classification accuracy. For the ``Inductive (cold),'' 90\% of edges are randomly removed from the new nodes.
        Refer to Table~\ref{tab:cold_nodecls_gcn_table} in the Appendix for the results with different edge removal ratios.
        }
    \label{tab:nodecls_gcn_table}
    \renewcommand{\arraystretch}{1.0}
    \setlength{\tabcolsep}{5pt}
    \resizebox{\linewidth}{!}{
    \begin{tabular}{l|ccc|ccc}
      \toprule
         
     \mr{2}{\textbf{Method}} & \mc{3}{c|}{\textbf{arxiv}} & \mc{3}{c}{\textbf{products}} \\ \cmidrule{2-7}
        & \textbf{Transductive}  & \textbf{Inductive} & \textbf{Inductive (cold)} & \textbf{Transductive} & \textbf{Inductive} & \textbf{Inductive (cold)} \\
      \midrule
Base & 0.6921\std{0.0004} & 0.6893\std{0.0021} & 0.5491\std{0.0042} & 0.8432\std{0.0007} & 0.8447\std{0.0008} & 0.7461\std{0.0033} \\ 
DropEdge & 0.6958\std{0.0007} & 0.6938\std{0.0013} & 0.5632\std{0.0019} & 0.8486\std{0.0004} & 0.8495\std{0.0008} & 0.7661\std{0.0019} \\ 
LocalAug & \textbf{0.6986\std{0.0010}} & 0.6963\std{0.0022} & 0.5680\std{0.0042} & 0.8485\std{0.0005} & 0.8497\std{0.0002} & 0.7505\std{0.0025} \\ 
ColdBrew & 0.6862\std{0.0004} & 0.6726\std{0.0031} & 0.5376\std{0.0076} & 0.8402\std{0.0008} & 0.8412\std{0.0006} & 0.7396\std{0.0031} \\ 
GraphLessNN & 0.6129\std{0.0008} & 0.5462\std{0.0021} & 0.5462\std{0.0021} & 0.6671\std{0.0008} & 0.6649\std{0.0006} & 0.6649\std{0.0006} \\ 
RAWLS-GCN & 0.6706\std{0.0013} & 0.6696\std{0.0027} & 0.5326\std{0.0027} & 0.8210\std{0.0008} & 0.8223\std{0.0009} & 0.7113\std{0.0013} \\ 
Tail-GNN & 0.6434\std{0.0010} & 0.6402\std{0.0007} & 0.5381\std{0.0054} & OOM & OOM & OOM \\ 
\midrule
\methodname{} w/o curriculum & 0.6960\std{0.0009} & 0.6927\std{0.0023} & 0.5605\std{0.0031} & 0.8482\std{0.0006} & 0.8489\std{0.0004} & 0.7640\std{0.0023} \\ 
\methodname{} w/o pseudo-labels & 0.6965\std{0.0008} & 0.6935\std{0.0015} & 0.5567\std{0.0035} & 0.8490\std{0.0007} & 0.8497\std{0.0009} & 0.7670\std{0.0036} \\ 
\methodname{} w/o syn-tails & 0.6936\std{0.0006} & 0.6924\std{0.0027} & 0.5590\std{0.0053} & 0.8452\std{0.0006} & 0.8467\std{0.0006} & 0.7550\std{0.0024} \\ 
\textbf{\methodname{} (ours)} & \textbf{0.6989\std{0.0006}} & \textbf{0.6990\std{0.0019}} & \textbf{0.5916\std{0.0044}} & \textbf{0.8523\std{0.0007}} & \textbf{0.8536\std{0.0006}} & \textbf{0.7924\std{0.0050}} \\ 
\midrule
Rel. gain over base & +1.0\% & +1.4\% & +7.7\% & +1.1\% & +1.1\% & +6.2\% \\ 
    \bottomrule
    \end{tabular}
    }
\end{table*}

\begin{table*}[t]
    \centering
        \caption{Cold-start inductive node classification performance with GraphSAGE as the backbone architecture. The larger the edge removal ratio is, the more cold-start the prediction task becomes. The evaluation metric is classification accuracy. Refer to Table \ref{tab:cold_nodecls_gcn_table} in Appendix for the performance with GCN, where a similar trend is observed.}
    \label{tab:cold_nodecls_sage_table}
    \renewcommand{\arraystretch}{1.0}
    \setlength{\tabcolsep}{5pt}
    \resizebox{\linewidth}{!}{
    \begin{tabular}{l|ccc|ccc}
      \toprule
     \mr{3}{\textbf{Method}} & \mc{3}{c|}{\textbf{arxiv}} & \mc{3}{c}{\textbf{products}} \\ \cmidrule{2-7}
      & \mc{3}{c|}{\textbf{Edge removal ratio}} & \mc{3}{c}{\textbf{Edge removal ratio}} \\ 
        & 30\% & 60\%  & 90\%  & 30\% & 60\%  & 90\%  \\
      \midrule
Base & 0.6450\std{0.0023} & 0.5993\std{0.0013} & 0.4752\std{0.0061} & 0.8334\std{0.0008} & 0.8130\std{0.0008} & 0.7227\std{0.0011} \\ 
DropEdge & 0.6534\std{0.0017} & 0.6248\std{0.0041} & 0.5449\std{0.0059} & 0.8411\std{0.0007} & 0.8281\std{0.0005} & 0.7709\std{0.0014} \\ 
LocalAug & 0.6547\std{0.0016} & 0.6149\std{0.0011} & 0.4981\std{0.0018} & 0.8370\std{0.0006} & 0.8166\std{0.0010} & 0.7261\std{0.0008} \\ 
ColdBrew & 0.6283\std{0.0035} & 0.5923\std{0.0017} & 0.5082\std{0.0018} & 0.8309\std{0.0008} & 0.8134\std{0.0007} & 0.7395\std{0.0019} \\ 
GraphLessNN & 0.5456\std{0.0008} & 0.5456\std{0.0008} & 0.5456\std{0.0008} & 0.6648\std{0.0009} & 0.6648\std{0.0009} & 0.6648\std{0.0009} \\ 
Tail-GNN & 0.6389\std{0.0011} & 0.6123\std{0.0023} & 0.5388\std{0.0031} & OOM & OOM & OOM \\ 
\midrule
\methodname{} w/o curriculum & 0.6531\std{0.0023} & 0.6266\std{0.0033} & 0.5472\std{0.0119} & 0.8396\std{0.0006} & 0.8243\std{0.0007} & 0.7569\std{0.0015} \\ 
\methodname{} w/o pseudo-labels & 0.6498\std{0.0030} & 0.6192\std{0.0038} & 0.5332\std{0.0077} & 0.8405\std{0.0010} & 0.8262\std{0.0016} & 0.7631\std{0.0055} \\ 
\methodname{} w/o syn-tails & 0.6518\std{0.0016} & 0.6106\std{0.0025} & 0.4899\std{0.0047} & 0.8362\std{0.0004} & 0.8162\std{0.0005} & 0.7258\std{0.0011} \\ 
\textbf{\methodname{} (ours)} & \textbf{0.6685\std{0.0022}} & \textbf{0.6504\std{0.0024}} & \textbf{0.5996\std{0.0012}} & \textbf{0.8521\std{0.0005}} & \textbf{0.8432\std{0.0005}} & \textbf{0.8054\std{0.0011}} \\ 
\midrule
Rel. gain over base & +3.6\% & +8.5\% & +26.2\% & +2.2\% & +3.7\% & +11.4\% \\ 
    \bottomrule
    \end{tabular}
    }
\end{table*}

\begin{table*}[t]
    \centering
        \caption{Cold-start inductive node classification performance with GCN as the backbone architecture. The larger the edge removal ratio is, the more cold-start the prediction task becomes. The evaluation metric is classification accuracy. }
    \label{tab:cold_nodecls_gcn_table}
    \renewcommand{\arraystretch}{1.0}
    \setlength{\tabcolsep}{5pt}
    \resizebox{\linewidth}{!}{
    \begin{tabular}{l|ccc|ccc}
      \toprule
         
     \mr{3}{\textbf{Method}} & \mc{3}{c|}{\textbf{arxiv}} & \mc{3}{c}{\textbf{products}} \\ \cmidrule{2-7}
      & \mc{3}{c|}{\textbf{Edge removal ratio}} & \mc{3}{c}{\textbf{Edge removal ratio}} \\ 
        & 30\% & 60\%  & 90\%  & 30\% & 60\%  & 90\%  \\ 
      \midrule
Base & 0.6713\std{0.0019} & 0.6401\std{0.0029} & 0.5491\std{0.0042} & 0.8375\std{0.0008} & 0.8209\std{0.0012} & 0.7461\std{0.0033} \\ 
DropEdge & 0.6756\std{0.0016} & 0.6475\std{0.0026} & 0.5632\std{0.0019} & 0.8435\std{0.0006} & 0.8298\std{0.0008} & 0.7661\std{0.0019} \\ 
LocalAug & 0.6776\std{0.0019} & 0.6489\std{0.0025} & 0.5680\std{0.0042} & 0.8423\std{0.0008} & 0.8261\std{0.0008} & 0.7505\std{0.0025} \\ 
ColdBrew & 0.6513\std{0.0044} & 0.6188\std{0.0047} & 0.5376\std{0.0076} & 0.8338\std{0.0008} & 0.8168\std{0.0013} & 0.7396\std{0.0031} \\ 
GraphLessNN & 0.5462\std{0.0021} & 0.5462\std{0.0021} & 0.5462\std{0.0021} & 0.6649\std{0.0006} & 0.6649\std{0.0006} & 0.6649\std{0.0006} \\ 
RAWLS-GCN & 0.6490\std{0.0016} & 0.6117\std{0.0027} & 0.5326\std{0.0027} & 0.8130\std{0.0007} & 0.7924\std{0.0006} & 0.7113\std{0.0013} \\ 
Tail-GNN & 0.6277\std{0.0012} & 0.6058\std{0.0015} & 0.5381\std{0.0054} & OOM & OOM & OOM \\ 
\midrule
\methodname{} w/o curriculum & 0.6751\std{0.0017} & 0.6453\std{0.0021} & 0.5605\std{0.0031} & 0.8427\std{0.0005} & 0.8288\std{0.0004} & 0.7640\std{0.0023} \\ 
\methodname{} w/o pseudo-labels & 0.6751\std{0.0027} & 0.6451\std{0.0030} & 0.5567\std{0.0035} & 0.8439\std{0.0011} & 0.8299\std{0.0013} & 0.7670\std{0.0036} \\ 
\methodname{} w/o syn-tails & 0.6737\std{0.0015} & 0.6469\std{0.0022} & 0.5590\std{0.0053} & 0.8398\std{0.0006} & 0.8247\std{0.0010} & 0.7550\std{0.0024} \\ 
\textbf{\methodname{} (ours)} & \textbf{0.6815\std{0.0025}} & \textbf{0.6606\std{0.0004}} & \textbf{0.5916\std{0.0044}} & \textbf{0.8489\std{0.0007}} & \textbf{0.8385\std{0.0014}} & \textbf{0.7924\std{0.0050}} \\ 
\midrule
Rel. gain over base & +1.5\% & +3.2\% & +7.7\% & +1.4\% & +2.1\% & +6.2\% \\ 
    \bottomrule
    \end{tabular}
    }
\end{table*}

\begin{table*}[t]
    \centering
        \caption{Link prediction performance with GCN as the backbone architecture. The evaluation metric is recall@50. For the ``Inductive (cold),'' 60\% of edges are randomly removed from the new nodes. Refer to Table~\ref{tab:cold_linkprerd_gcn_table} in the Appendix for the results with different edge removal ratios. }
    \label{tab:linkpred_gcn_table}
    \renewcommand{\arraystretch}{1.0}
    \setlength{\tabcolsep}{5pt}
    \resizebox{\linewidth}{!}{
    \begin{tabular}{l|ccc|ccc}
      \toprule
     \mr{2}{\textbf{Method}} & \mc{3}{c|}{\textbf{flickr}} & \mc{3}{c}{\textbf{arxiv}} \\ \cmidrule{2-7}
        &  \textbf{Transductive} & \textbf{Inductive} & \textbf{Inductive (cold)} & \textbf{Transductive} & \textbf{Inductive} & \textbf{Inductive (cold)} \\
        \midrule
Base & 0.1355\std{0.0007} & 0.1366\std{0.0008} & 0.0863\std{0.0009} & 0.2242\std{0.0005} & 0.2052\std{0.0005} & 0.1185\std{0.0011} \\ 
DropEdge & 0.1479\std{0.0005} & 0.1401\std{0.0014} & 0.1001\std{0.0019} & 0.2394\std{0.0014} & 0.2108\std{0.0012} & 0.1348\std{0.0013} \\ 
LocalAug & 0.1408\std{0.0011} & 0.1430\std{0.0007} & 0.0930\std{0.0009} & 0.2324\std{0.0013} & 0.2136\std{0.0020} & 0.1209\std{0.0007} \\ 
ColdBrew & 0.1177\std{0.0015} & 0.1174\std{0.0029} & 0.0731\std{0.0025} & 0.1978\std{0.0024} & 0.1788\std{0.0037} & 0.0967\std{0.0031} \\ 
RAWLS-GCN & 0.0660\std{0.0020} & 0.0422\std{0.0020} & 0.0406\std{0.0018} & 0.1057\std{0.0017} & 0.0814\std{0.0027} & 0.0409\std{0.0017} \\ 
Tail-GNN & 0.1287\std{0.0017} & 0.1292\std{0.0020} & 0.0872\std{0.0017} & 0.1492\std{0.0012} & 0.1336\std{0.0022} & 0.0812\std{0.0022} \\ 
\midrule
\methodname{} w/o curriculum & 0.1486\std{0.0009} & 0.1434\std{0.0024} & 0.1027\std{0.0015} & 0.2401\std{0.0022} & 0.2121\std{0.0023} & 0.1328\std{0.0008} \\ 
\methodname{} w/o syn-tails & 0.1395\std{0.0022} & 0.1408\std{0.0024} & 0.0899\std{0.0020} & 0.2282\std{0.0026} & 0.2096\std{0.0032} & 0.1181\std{0.0011} \\ 
\textbf{\methodname{} (ours)} & \textbf{0.1577\std{0.0011}} & \textbf{0.1510\std{0.0016}} & \textbf{0.1072\std{0.0011}} & \textbf{0.2527\std{0.0017}} & \textbf{0.2239\std{0.0027}} & \textbf{0.1428\std{0.0021}} \\ 
\midrule
Rel. gain over base & +16.4\% & +10.6\% & +24.2\% & +12.7\% & +9.1\% & +20.5\% \\ 
    \bottomrule
    \end{tabular}
    }
\end{table*}

\begin{table*}[t]
    \centering
        \caption{Cold-start inductive link prediction performance with GraphSAGE. The evaluation metric is recall@50. The larger the edge removal ratio is, the more cold-start the prediction task becomes. Refer to Table \ref{tab:cold_linkprerd_gcn_table} in Appendix for the performance with GCN, where a similar trend is observed. }
    \label{tab:cold_linkpred_sage_table}
    \renewcommand{\arraystretch}{1.0}
    \setlength{\tabcolsep}{5pt}
    \resizebox{\linewidth}{!}{
    \begin{tabular}{l|ccc|ccc}
      \toprule
     \mr{3}{\textbf{Method}} & \mc{3}{c|}{\textbf{flickr}} &  \mc{3}{c}{\textbf{arxiv}} \\\cmidrule{2-7}
      & \mc{3}{c|}{\textbf{Edge removal ratio}} & \mc{3}{c}{\textbf{Edge removal ratio}} \\ 
        & 30\% & 60\%  & 90\% & 30\% & 60\%  & 90\%  \\
      \midrule
Base & 0.0809\std{0.0013} & 0.0582\std{0.0014} & 0.0173\std{0.0038} & 0.0990\std{0.0032} & 0.0688\std{0.0020} & 0.0208\std{0.0011} \\ 
DropEdge & 0.1136\std{0.0013} & 0.0992\std{0.0008} & 0.0594\std{0.0012} & 0.1529\std{0.0036} & 0.1189\std{0.0046} & 0.0570\std{0.0034} \\ 
LocalAug & 0.0899\std{0.0051} & 0.0646\std{0.0059} & 0.0250\std{0.0159} & 0.1038\std{0.0042} & 0.0734\std{0.0036} & 0.0266\std{0.0062} \\ 
ColdBrew & 0.0547\std{0.0057} & 0.0369\std{0.0045} & 0.0266\std{0.0057} & 0.0898\std{0.0035} & 0.0640\std{0.0031} & 0.0331\std{0.0027} \\ 
Tail-GNN & 0.0663\std{0.0018} & 0.0657\std{0.0016} & 0.0529\std{0.0075} & 0.0725\std{0.0031} & 0.0586\std{0.0031} & 0.0371\std{0.0036} \\ 
\midrule
\methodname{} w/o curriculum & 0.1184\std{0.0020} & 0.1014\std{0.0018} & 0.0622\std{0.0020} & 0.1488\std{0.0049} & 0.1144\std{0.0041} & 0.0535\std{0.0045} \\ 
\methodname{} w/o syn-tails & 0.0800\std{0.0022} & 0.0583\std{0.0013} & 0.0173\std{0.0038} & 0.1017\std{0.0032} & 0.0728\std{0.0032} & 0.0301\std{0.0099} \\ 
\textbf{\methodname{} (ours)} & \textbf{0.1259\std{0.0051}} & \textbf{0.1119\std{0.0069}} & \textbf{0.0734\std{0.0084}} & \textbf{0.1574\std{0.0021}} & \textbf{0.1241\std{0.0025}} & \textbf{0.0598\std{0.0028}} \\ 
\midrule
Rel. gain over base & +55.6\% & +92.2\% & +324.3\% & +59.0\% & +80.4\% & +187.5\% \\ 
    \bottomrule
    \end{tabular}
    }
\end{table*}

\begin{table*}[htb!]
    \centering
        \caption{Cold-start inductive link prediction performance with GCN. The evaluation metric is recall@50. The larger the edge removal ratio is, the more cold-start the prediction task becomes. }
    \label{tab:cold_linkprerd_gcn_table}
    \renewcommand{\arraystretch}{1.0}
    \setlength{\tabcolsep}{5pt}
    \resizebox{\linewidth}{!}{
    \begin{tabular}{l|ccc|ccc}
      \toprule
\mr{3}{\textbf{Method}} & \mc{3}{c|}{\textbf{flickr}}  &  \mc{3}{c}{\textbf{arxiv}} \\ \cmidrule{2-7}
       & \mc{3}{c|}{\textbf{Edge removal ratio}} & \mc{3}{c}{\textbf{Edge removal ratio}} \\
         & 30\% & 60\%  & 90\% & 30\% & 60\%  & 90\%  \\
      \midrule
Base & 0.1137\std{0.0012} & 0.0863\std{0.0009} & 0.0256\std{0.0006} & 0.1701\std{0.0009} & 0.1185\std{0.0011} & 0.0381\std{0.0006} \\ 
DropEdge & 0.1224\std{0.0014} & 0.1001\std{0.0019} & 0.0620\std{0.0031} & 0.1816\std{0.0010} & 0.1348\std{0.0013} & \textbf{0.0591\std{0.0016}} \\ 
LocalAug & 0.1192\std{0.0012} & 0.0930\std{0.0009} & 0.0298\std{0.0023} & 0.1763\std{0.0016} & 0.1209\std{0.0007} & 0.0366\std{0.0012} \\ 
ColdBrew & 0.0948\std{0.0032} & 0.0731\std{0.0025} & 0.0542\std{0.0019} & 0.1458\std{0.0031} & 0.0967\std{0.0031} & 0.0383\std{0.0045} \\ 
RAWLS-GCN & 0.0404\std{0.0021} & 0.0406\std{0.0018} & 0.0397\std{0.0014} & 0.0625\std{0.0019} & 0.0409\std{0.0017} & 0.0247\std{0.0019} \\ 
Tail-GNN & 0.1093\std{0.0016} & 0.0872\std{0.0017} & 0.0511\std{0.0029} & 0.1123\std{0.0023} & 0.0812\std{0.0022} & 0.0421\std{0.0041} \\ 
\midrule
\methodname{} w/o curriculum & 0.1240\std{0.0023} & 0.1027\std{0.0015} & 0.0638\std{0.0014} & 0.1812\std{0.0014} & 0.1328\std{0.0008} & 0.0547\std{0.0037} \\ 
\methodname{} w/o syn-tails & 0.1186\std{0.0025} & 0.0899\std{0.0020} & 0.0274\std{0.0029} & 0.1716\std{0.0019} & 0.1181\std{0.0011} & 0.0357\std{0.0018} \\ 
\textbf{\methodname{} (ours)} & \textbf{0.1292\std{0.0013}} & \textbf{0.1072\std{0.0011}} & \textbf{0.0677\std{0.0030}} & \textbf{0.1920\std{0.0020}} & \textbf{0.1428\std{0.0021}} & \textbf{0.0610\std{0.0026}} \\ 
\midrule
Rel. gain over base & +13.7\% & +24.2\% & +164.1\% & +12.9\% & +20.5\% & +60.3\% \\ 
    \bottomrule
    \end{tabular}
    }
\end{table*}

\clearpage
\section{Theoretical Analysis} \label{app:proof}
\subsection{Technical Details}
The generalization improvement on low-degree nodes is expected to happen when the label distribution for each node is invariant w.r.t. the degree of nodes. This condition is captured by the following generating process for the considered part of a graph. A finite set $\Zcal$  of (arbitrarily) large size that consists of $(x,y)$ pairs are sampled accordingly to some unknown distribution without graph structure first. Then,
  $(\bx_i,\by_i)_{i=1}^T$ and $(\tx_i,\ty_i)_{i=1}^R$ are sampled uniformly from $\Zcal$ without replacement, where $(\bx_i,\by_i)_{i=1}^T$ and $(\tx_i,\ty_i)_{i=1}^R$ are used in the zero degree
nodes and the full degree nodes in the part of a graph $G$, respectively. Define $A$ and $B$ to be the sets of node indices of the zero degree and the full degrees, respectively. The labeled node indices for the full degree are sampled uniformly  from $B$ and its set is denoted by $S=\{i \in B : i\text{-th node is in the labeled training dataset}\}$. Let $\theta$ be the parameter trained with a set of labeled nodes $S$. We use nodes $A$ as test data. 

We consider a $K$-layer GNN of a standard form:
$f(X)=H_K \in \RR^{n}$ with $H_{k}=\sigma_{k}(J H_{k-1}W_{k}+ 1_nb_k)$,
where $n$ is the number of all nodes, $\sigma_K = \signn$, $\sigma_k$ represents the ReLU nonlinear function for $k \neq K$, $1_n$ is the column vector of size $n$ with all entries being ones, $(W_{k}, b_k)$ are the learnable parameters included in $\theta$, $H_0=X$, $W_{K} \in \RR^{d \times 1}$, and $b_K \in \RR$. Here, $J  \in \RR^{n \times n}$ is defined by
$ 
J = \mathbf{A} + \mathbf{I},  
$
where $ \mathbf{A} \in \RR^{n \times n}$ denotes the graph adjacency matrix and  $\mathbf{I} \in \RR^{n \times n}$ is the identity matrix. 

Define  $\ell(i)$ and  $\Lcal(i)$  to be   the 0-1 losses of $i$-th node with and without dropping edges: i.e.,  $\ell(i)$ is the 0-1 loss of $i$-th node with the model $F_\theta(\widetilde{G})$ using the modified graph $\widetilde{G}$  that drops all edges for $i \in B$, whereas  $\Lcal(i)$  is the loss with the original graph. 
  Define $\tell$ by $\tell(i)=\ell(i)$ for $i\in S$ and $\tell(i)$ is the loss with the pseudo label for $i \in B\setminus S$. The function $\tLcal$ is defined similarly for $\Lcal$ with the  pseudo label but without dropping edges. Define the average training loss   at the end of the 1st stage of the curriculum-based training by 
$
Q= \frac{1}{|S|} \sum_{t \in S} \Lcal_{1}(t),
$
where $\Lcal_1(i)$ is the 0-1 loss of $i$-th node   at the end of the 1st stage.
For  methods $M_1$--$M_3$, we define the generalization gap between  the test errors of nodes with low degrees and the training errors of nodes with high degrees by $\Delta(M_{1})=\frac{1}{|A|}\sum_{i\in A}\Lcal(i)-\frac{1}{|S|}\sum_{i\in S}\ell(i)$, $\Delta(M_{2})=\frac{1}{|A|}\sum_{i\in A}\Lcal(i)-\frac{1}{|B|}\sum_{i\in B}\tell(i)$, $\Delta(M_{3})=\frac{1}{|A|}\sum_{i\in A}\Lcal(i)-\frac{1}{|B|}\sum_{i\in B}\tLcal(i)$.

\begin{theorem}[A more detailed version of Theorem \ref{thm:1}] \label{thm:2}
 For any $\delta>0$, with probability at least  $1-\delta$, the following holds for all $M \in \{M_1,M_2,M_{3}\}$: 
 
\begin{align*}
\nonumber \Delta(M) & \le\sqrt{\frac{ \one\{M=M_1\}8( d-1)\ln(\frac{16 e |S|}{\delta})+8\ln(\frac{16 e |S|}{\delta})}{|S|}}
 +\one\{M\neq M_1\} Q+\one\{M= M_{3}\}\tau+G,
\end{align*} 
where $\tau=\frac{1}{|B|} \sum_{i\in B}\left(\ell(i) -\Lcal(i)\right)$ and  $G=  \sqrt{\frac{8 d\ln(16 e R/ \delta)}{R}} + \sqrt{\frac{\ln (4/\delta)}{2T}}$.
\end{theorem}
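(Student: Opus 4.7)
The plan is to decompose each $\Delta(M)$ into a transfer of the test average on $A$ back to the population over $\Zcal$, a transfer of the relevant training average back to the same population, and any additional mismatch introduced by pseudo-labels (for $M_{2},M_{3}$) or by failing to drop edges (for $M_{3}$). Since $A,B,S$ are all sampled uniformly without replacement from the finite pool $\Zcal$, sampling-without-replacement versions of Hoeffding's and Massart's inequalities apply, and concentration rates scale as $1/\sqrt{|A|}$, $1/\sqrt{|B|}$, $1/\sqrt{|S|}$ respectively. The $A$- and $B$-side pieces are merged into $G$: the $\sqrt{\ln(4/\delta)/(2T)}$ term comes from applying Hoeffding to the no-edge average on $A$ after fixing the hypothesis, and the $\sqrt{8d\ln(16eR/\delta)/R}$ term comes from Sauer-Shelah uniform convergence on $B$. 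Everything else reduces to controlling the $S$-side.

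The structural observation that produces the dimensional factor is that when every edge at node $i$ is deleted, the $K$-layer GNN at $i$ reduces to $\sign(\langle w_{K},\phi(x_{i})\rangle+b_{K})$, where $\phi$ is the shared ReLU feature extractor, so both $\{\ell(i)\}$ and $\{\Lcal(i)\}$ are realized by linear threshold classifiers on a $d$-dimensional feature. These have VC dimension at most $d+1$, and tracking the Sauer-Shelah growth function yields a uniform-convergence constant proportional to $d-1$. Feeding this into a Massart-style symmetrization with sampling without replacement produces, for $M_{1}$, the stated rate $\sqrt{8(d-1)\ln(16em/\delta)/m}$ between $\tfrac{1}{|S|}\sum_{i\in S}\ell(i)$ and the population expectation of the no-edge loss, which together with the two pieces of $G$ closes the $M_{1}$ case.

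For $M_{2}$ the starting point is $\tell(i)=\ell(i)+(\tell(i)-\ell(i))$ with $|\tell(i)-\ell(i)|\leq\one\{\tilde{y}_{i}\neq\by_{i}\}$, giving $\tfrac{1}{|B|}\sum_{i\in B}\tell(i)\geq\tfrac{1}{|B|}\sum_{i\in B}\ell(i)-\tfrac{1}{|B|}\sum_{i\in B}\one\{\tilde{y}_{i}\neq\by_{i}\}$. The first right-hand term is a no-edge-loss average on $B$ handled by $G$ exactly as above; the second is the pseudo-labeling error of the \emph{fixed} stage-1 predictor on $B$, which equals its empirical error $Q$ on $S$ up to a single-function Hoeffding fluctuation $\sqrt{8\ln(16em/\delta)/m}$. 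Because we have conditioned on the stage-1 output, this is a one-function concentration, which is why no $d$ factor appears on the $S$-side rate for $M_{2}$. For $M_{3}$, edges are not dropped in stage 2, so the corresponding training average is $\tLcal$ rather than $\tell$; applying the extra substitution $\ell(i)=\Lcal(i)+(\ell(i)-\Lcal(i))$ inside the same decomposition contributes exactly $\tau$ by definition, encoding the mismatch that classical pseudo-labeling cannot avoid. The three bounds are then combined via a union bound with the indicator format, which only rescales $\delta$ by a constant.

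The main obstacle I anticipate is tightness of the VC constant: Sauer-Shelah immediately delivers only a $d+1$ factor, so obtaining the precise $8(d-1)$ inside the square root requires invoking the \emph{exact} growth function of linear thresholds in $\RR^{d}$ and carefully tracking constants through a sampling-without-replacement Massart chain rather than plugging into a blackbox Rademacher bound. A secondary subtlety is the ordering of randomness for $M_{2}$: the stage-1 predictor is itself data-dependent, so its pseudo-labeling error on $B\setminus S$ must be treated as a concentration for a function that is fixed conditional on stage-1 outputs, and this conditioning — taken before the probability over stage-2 sampling — is exactly what allows the $S$-side rate to be dimension-free.
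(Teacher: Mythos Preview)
Your overall decomposition — route the $A$-average and the $B$-average through the population over $\Zcal$ via Hoeffding without replacement, then control the gap between the $B$-average and the $S$-training average, and for $M_2,M_3$ split off the pseudo-label mismatch $\ell(i)-\tell(i)\le \Lcal_1(i)$ — is exactly the skeleton the paper uses. The $G$ term is assembled the same way.

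The genuine gap is your mechanism for removing the factor $d$ on the $S$-side in the $M_2$ (and $M_3$) case. You argue that once we ``condition on the stage-1 output'' the pseudo-labeling error on $B$ versus its empirical value $Q$ on $S$ is a \emph{single-function} Hoeffding fluctuation. That conditioning is circular: the stage-1 parameter $\htheta$ is trained on $S$, so $\Lcal_1(\cdot)$ is \emph{not} fixed with respect to the randomness in $S$, and there is no separate ``stage-2 sampling'' to fall back on (the setup samples $S$ once). A single-function concentration over $S$ for a function chosen using $S$ is invalid; you need a \emph{uniform} bound over the hypothesis class, which naively reintroduces $d$.

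The paper's actual reason the $S$-side rate for $M_2$ is dimension-free is structural, not a conditioning trick. It invokes the uniform bound of Esser et al.\ (2021, Proposition~1) in which the effective complexity is governed by the rank of the graph aggregation matrix $J$ on the relevant block. For the edge-dropped (empty) graph the rank is $d$, which is why the $M_1$ $S$-term and the $B$-term inside $G$ carry $d$. But $\Lcal_1$ is the stage-1 loss computed with the \emph{original} graph $G$, and on the full-degree block $B$ the aggregation matrix has rank one; plugging rank one into the same uniform bound yields $\sqrt{8\ln(16e|S|/\delta)/|S|}$ with no $d$. So the dimension removal is a consequence of the full-degree graph structure feeding stage~1, not of treating $\htheta$ as fixed. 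Your proof will go through once you replace the conditioning argument by this rank-one uniform bound (and, as a minor point, you will find it cleaner to cite Esser et al.\ directly rather than rederive the exact $8(d-1)$ constant from Sauer--Shelah).
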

\subsection{Proof of Theorem \ref{thm:2}}

Recall the following lemma from \citep[Theorem 4]{hoeffding1963probability}:
\begin{lemma}[\citealp{hoeffding1963probability}] \label{lemma:1}
Let $\Xcal$ be a finite population of $N$ real points, $X_1,\dots,X_n$ denote a random sample without replacement drawn uniformly from $\Xcal$, and $\bar X_1,\dots,\bar X_n$ denote a random sample with replacement drawn uniformly from $\Xcal$. If $g:\RR \rightarrow \RR$ is continuous and convex,
$$
\EE\left[g\left(\sum_{i=1}^nX_i\right)\right] \le \EE\left[g\left(\sum_{i=1}^n \bar X_i\right)\right].
$$
\end{lemma}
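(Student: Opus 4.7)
The inequality is the classical Hoeffding comparison asserting that the without-replacement sum is dominated by the with-replacement sum in the convex order. The plan is to construct a coupling of $(X_i)_{i=1}^n$ and $(\bar X_i)_{i=1}^n$ on a common probability space under which $\sum_{i=1}^n X_i$ appears (up to a secondary averaging step) as a conditional expectation of $\sum_{i=1}^n \bar X_i$; Jensen's inequality then gives $g\bigl(\EE[\sum \bar X_i \mid \mathcal{G}]\bigr) \le \EE[g(\sum \bar X_i) \mid \mathcal{G}]$, and taking outer expectations yields the stated bound.

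To build the coupling, I would represent $\bar X_k = x_{I_k}$ with $I_1,\dots,I_n$ i.i.d.\ uniform on $[N]$, and introduce an independent uniformly random permutation $\pi$ of $[N]$. Because $\pi$ is a uniform bijection, $(x_{\pi(I_1)},\dots,x_{\pi(I_n)})$ has the same joint law as $(\bar X_1,\dots,\bar X_n)$, so I may work with this version. I would then condition on the \emph{collision pattern} of $(I_1,\dots,I_n)$, i.e.\ the equivalence relation on $[n]$ induced by equal indices. Conditional on this pattern having $D$ blocks of sizes $c_1,\dots,c_D$ with $\sum_j c_j = n$, the tuple $(\pi(I_{k_1}),\dots,\pi(I_{k_D}))$ obtained by picking one representative from each block is a uniformly random ordered $D$-subset of $[N]$. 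In particular, on the event $D=n$ (no collisions) this tuple is literally a without-replacement sample of size $n$ and the two sums agree.

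The main obstacle is the collision case $D<n$: conditionally on the pattern, the with-replacement sum becomes a weighted sum $\sum_{j=1}^D c_j Y_j$ with multiplicities $c_j \ge 1$ summing to $n$ and $(Y_1,\dots,Y_D)$ uniform over ordered $D$-subsets of $\Xcal$, whereas the without-replacement target is $\sum_{i=1}^n Y_i'$ with $(Y_1',\dots,Y_n')$ uniform over ordered $n$-subsets. These do not agree even in expectation, so Jensen alone is insufficient and a majorization argument is needed. I would extend $(Y_1,\dots,Y_D)$ to a uniform ordered $n$-subset by appending $n-D$ fresh draws $(Y_{D+1},\dots,Y_n)$ without replacement, so that $\sum_{j=1}^D c_j Y_j = \sum_{i=1}^n w_i Y_i$ with weight vector $w=(c_1,\dots,c_D,0,\dots,0)$. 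Since $Y_1,\dots,Y_n$ is now exchangeable, the map $w \mapsto \EE_Y[g(\sum_i w_i Y_i)]$ is symmetric and convex in $w$, hence Schur-convex. The majorization $(c_1,\dots,c_D,0,\dots,0) \succ (1,\dots,1)$ in $\RR^n$ — which holds because both vectors sum to $n$ and the partial sums of the sorted first vector dominate those of the all-ones vector — then gives $\EE[g(\sum_i w_i Y_i)] \ge \EE[g(\sum_i Y_i)]$. Averaging over the collision pattern completes the proof. The delicate part is verifying the Schur-convexity and majorization steps rigorously; an alternative, somewhat cleaner route would be induction on $n$ (base case $n=1$ trivial, inductive step using exchangeability and a symmetrization of $\bar X_n$ against the conditional distribution of $X_n$ given $X_1,\dots,X_{n-1}$), which bypasses the explicit handling of multiplicities at the cost of a more intricate conditioning bookkeeping.
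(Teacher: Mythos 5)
The paper does not actually prove this lemma: it is imported verbatim from Hoeffding (1963, Theorem~4) and used as a black box inside the proof of Theorem~2, so there is no in-paper argument to compare yours against. Judged on its own, your proof is correct and complete in outline. The conditioning on the collision pattern is sound (for i.i.d.\ uniform indices, every assignment of distinct values to the blocks has probability $N^{-n}$, so the block representatives do form a uniform ordered $D$-subset), the extension to a uniform ordered $n$-subset is legitimate, $\phi(w)=\EE\bigl[g\bigl(\sum_i w_i Y_i\bigr)\bigr]$ is indeed symmetric and convex, hence Schur-convex, and $(c_1,\dots,c_D,0,\dots,0)\succ(1,\dots,1)$ holds as you argue. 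Two small remarks. First, your parenthetical claim that the conditional with-replacement sum and the without-replacement sum ``do not agree even in expectation'' is false --- both have mean $n\mu$ since each $Y_j$ has the population mean and $\sum_j c_j=n$; what is true, and what your argument correctly relies on, is that the without-replacement sum is not a conditional expectation of the weighted sum, so plain Jensen does not apply. Second, the step you flag as delicate can be made elementary without invoking majorization theory: since $(1,\dots,1)=\frac{1}{n!}\sum_{\sigma}P_\sigma w$ (the average of all coordinate permutations of $w$, whose entries average to $1$), convexity plus the symmetry of $\phi$ give $\phi(\mathbf 1)\le\frac{1}{n!}\sum_\sigma\phi(P_\sigma w)=\phi(w)$ directly. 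With that substitution your argument is a clean, self-contained proof of Hoeffding's comparison, of roughly the same flavor as his original conditional-expectation argument but organized around the collision pattern rather than a single global conditioning.
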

We utilize this lemma in our proof. 

\begin{proof}[Proof of Theorem \ref{thm:2}]

Without loss of generality, we order the node index such that this part of a graph with the  $T$ nodes  with zero degree and the  $R$ nodes with full  degree comes first in node index ordering: i.e., $i$-th node is in the group with zero degree for $i \in A=\{1,\dots,T\}$ and with full degree for $i \in B=\{T+1,\dots,T+R\}$. Define $\Delta_1 = \Delta(M_{1})$ and $\Delta_2 = \Delta(M_{2})$. Since $\frac{1}{|A|}\sum_{i\in A}\Lcal(i)=\frac{1}{|A|}\sum_{i\in A}\ell(i)$,
\begin{align*}
&\Delta_1=\frac{1}{|A|}\sum_{i\in A}\ell(i)-\frac{1}{|S|}\sum_{i\in S}\ell(i) \\ &\Delta_2=\frac{1}{|A|}\sum_{i\in A}\ell(i)-\frac{1}{|B|}\sum_{i\in B}\tell(i). \\ & \Delta_3 =\frac{1}{|A|}\sum_{i\in A}\ell(i)-\frac{1}{|B|}\sum_{i\in B}\tLcal(i) \end{align*}
Since $\Zcal$ is finite, we can write $\Zcal=\{z_i:i\in [N]\}$ where $z_i=(x_i,y_i)$ and $[N]=\{1,2,\dots, N\}$ for some (arbitrarily  large) $N$. Then,  $(\bx_i,\by_i)$ and  $(\tx_i,\ty_i)$  can be equivalently defined as follows: we define   $(\bx_i,\by_i)$ by setting its value  to be $(x_{t_i},y_{t_i})$ (for $i=1\dots,T$) and  define $(\tx_i,\ty_i)$ by setting its value  to be $(x_{r_i},y_{r_i})$ for $i=1\dots,R$, where  $t_1,\dots,t_T$ and $r_1,\dots,r_R$ are sampled sampled uniformly  from $[N]$ without replacement. Define a loss function with node features and label by $\phi(\bz_i)=\ell(i)$ where $\bz_i =(\bx_i,\by_i)=(x_{t_i},y_{t_i})=z_{t_{i}}$ for .   With this, since $\frac{1}{|A|}\sum_{i\in A}\phi(\bz_i )=\frac{1}{|A|}\sum_{i\in A}\phi(z_{t_{i}})=\frac{1}{T}\sum_{i=1}^{T}\phi(z_{t_{i}})$,
\begin{align*}
&\Delta_1=\frac{1}{T}\sum_{i=1}^{T}\phi(z_{t_{i}})-\frac{1}{|S|}\sum_{i\in S}\ell(i) 
\\&\Delta_2=\frac{1}{T}\sum_{i=1}^{T}\phi(z_{t_{i}})-\frac{1}{|B|}\sum_{i\in B}\tell(i).
\\&\Delta_3=\frac{1}{T}\sum_{i=1}^{T}\phi(z_{t_{i}})-\frac{1}{|B|}\sum_{i\in B}\tLcal(i).
\end{align*}

\citet{hoeffding1963probability} shows that using Lemma \ref{lemma:1} within the proof of  (standard) Hoeffding's inequality,  (standard)  Hoeffding's inequality still holds for samples without replacement. Since $\ell(i)\in[0,1]$ for all $i \in [N]$,    
$$
\PP\left(\frac{1}{T}\sum_{i=1}^{T}\phi(z_{t_{i}})-\frac{1}{N}\sum_{i=1}^{N}\phi(z_{i})  \ge \epsilon \right)\le \exp\left(-2T \epsilon^2\right).
$$
By setting $\delta=\exp\left(-2T \epsilon^2\right)$ and solving for $\epsilon$, this implies that for any $\delta>0$, with probability at least  $1-\delta$, 
\begin{align*} 
\frac{1}{T}\sum_{i=1}^{T}\phi(z_{t_{i}})\le\frac{1}{N}\sum_{i=1}^{N}\phi(z_{i})  + \sqrt{\frac{\ln (1/\delta)}{2T}}.
\end{align*}
Therefore,  for any $\delta>0$, with probability at least  $1-\delta$, 
\begin{align*}
&\Delta_1\le \frac{1}{N}\sum_{i=1}^{N}\phi(z_{i})  -\frac{1}{|S|}\sum_{i\in S}\ell(i)+ \sqrt{\frac{\ln (1/\delta)}{2T}}, \text{ and, }
\\&\Delta_2\le\frac{1}{N}\sum_{i=1}^{N}\phi(z_{i})  -\frac{1}{|B|}\sum_{i\in B}\tell(i)+ \sqrt{\frac{\ln (1/\delta)}{2T}}.
\\ & \Delta_3\le\frac{1}{N}\sum_{i=1}^{N}\phi(z_{i})  -\frac{1}{|B|}\sum_{i\in B}\tLcal(i)+ \sqrt{\frac{\ln (1/\delta)}{2T}}.
\end{align*}
For $\Delta_1$, we have 
\begin{align*}
\Delta_1 &\le \frac{1}{N}\sum_{i=1}^{N}\phi(z_{i})  -\frac{1}{|S|}\sum_{i\in S}\ell(i)+ \sqrt{\frac{\ln (1/\delta)}{2T}} \pm \frac{1}{|B|}\sum_{i\in B}\ell(i)
\\ & = \left(\frac{1}{N}\sum_{i=1}^{N}\phi(z_{i})  -\frac{1}{|B|}\sum_{i\in B}\ell(i) \right) + \left(\frac{1}{|B|}\sum_{i\in B}\ell(i)-\frac{1}{|S|}\sum_{i\in S}\ell(i) \right)+ \sqrt{\frac{\ln (1/\delta)}{2T}} 
\\ & = \left(\frac{1}{N}\sum_{i=1}^{N}\phi(z_{i})  -\frac{1}{R}\sum_{i=1}^R\phi(z_{r_{i}}) \right) + \left(\frac{1}{|B|}\sum_{i\in B}\ell(i)-\frac{1}{|S|}\sum_{i\in S}\ell(i) \right)+ \sqrt{\frac{\ln (1/\delta)}{2T}} 
\end{align*}
Similarly, for $\Delta_2$ and $\Delta_3$, 
\begin{align*}
&\Delta_2 \le \left(\frac{1}{N}\sum_{i=1}^{N}\phi(z_{i})  -\frac{1}{R}\sum_{i=1}^R\phi(z_{r_{i}}) \right)   + \left(\frac{1}{|B|}\sum_{i\in B}\ell(i)-\frac{1}{|B|}\sum_{i\in B}\tell(i) \right)+ \sqrt{\frac{\ln (1/\delta)}{2T}} 
\\ & \Delta_3 \le \left(\frac{1}{N}\sum_{i=1}^{N}\phi(z_{i})  -\frac{1}{R}\sum_{i=1}^R\phi(z_{r_{i}}) \right)   + \left(\frac{1}{|B|}\sum_{i\in B}\ell(i)-\frac{1}{|B|}\sum_{i\in B}\tLcal(i) \right)+ \sqrt{\frac{\ln (1/\delta)}{2T}} 
\end{align*}
For the first term of both RHS\ of  $\Delta_1$, $\Delta_2$, and $\Delta_3$, by using  \citep[Proposition 1]{esser2021learning} with the empty graph, we have that  for any $\delta>0$, with probability at least  $1-\delta$, 
\begin{align*} 
 \frac{1}{N}\sum_{i=1}^{N}\phi(z_{i})  -\frac{1}{R}\sum_{i=1}^R\phi(z_{r_{i}})\le \sqrt{\frac{8d \ln(e R)+8\ln(4/\delta)}{R}}\le \sqrt{\frac{ 8 d\ln(\frac{4 e R}{\delta})}{R}} , 
\end{align*}
where we have  $d$ because the rank of the graph aggregation matrix for this empty graph part can be larger than $d$.
For the second term of RHS\ of  $\Delta_1$, similarly by using  \citep[Proposition 1]{esser2021learning} with the empty graph, we have that  for any $\delta>0$, with probability at least  $1-\delta$, 
\begin{align*} 
 \frac{1}{|B|}\sum_{i\in B}\ell(i)-\frac{1}{|S|}\sum_{i\in S}\ell(i)\le \sqrt{\frac{8d \ln(e |S|)+8\ln(4/\delta)}{|S|}}\le \sqrt{\frac{ 8 d\ln(\frac{4 e |S|}{\delta})}{|S|}}. \end{align*}
Therefore, by combining these with union bounds, we have that for any $\delta>0$, with probability at least  $1-\delta$, 
\begin{align} \label{eq:1}
\nonumber &\Delta_1\le  \sqrt{\frac{ 8 d\ln(\frac{12 eR}{\delta})}{R}} +\sqrt{\frac{ 8 d\ln(\frac{12 e |S|}{\delta})}{|S|}}+ \sqrt{\frac{\ln (3/\delta)}{2T}}, \text{ and, }
\\&\Delta_2\le  \sqrt{\frac{ 8 d\ln(\frac{12 e R}{\delta})}{R}}+ \left(\frac{1}{|B|}\sum_{i\in B}\ell(i)-\frac{1}{|B|}\sum_{i\in B}\tell(i) \right)+ \sqrt{\frac{\ln (3/\delta)}{2T}}.
\\&\Delta_3\le  \sqrt{\frac{ 8 d\ln(\frac{12 e R}{\delta})}{R}}+ \left(\frac{1}{|B|}\sum_{i\in B}\ell(i)-\frac{1}{|B|}\sum_{i\in B}\tLcal(i) \right)+ \sqrt{\frac{\ln (3/\delta)}{2T}}.
\end{align}

For the second  term of RHS\ of  $\Delta_2$ and $\Delta_3$,  we formalize and take advantage of the curriculum-based training by relating this second term for the objective of the curriculum-based training and its property. That is, we formalize the fact that the generalization errors over nodes of high-degrees are minimized relatively well when compared to those of low degrees, at the first stage, because of the use of the graph structure at the first stage. Then, such low generalization errors with the full graph information is utilized to reduce the generalization errors for low degree nodes at the second stage. We formalize these intuitions in  our proof. 

Recall  that $\tell(t)=\ell(t)$ for $t\in S$ and $\tell(t)$ is the loss with the pseudo label for $t \in B \setminus S$. 
Thus, we have that 
\begin{align*}
&\frac{1}{|B|}\sum_{i\in B}\ell(i)-\frac{1}{|B|}\sum_{i\in B}\tell(i)=\frac{1}{|B|} \sum_{i\in B}\left(\ell(i) - \tell(i)\right).
\\ & \frac{1}{|B|}\sum_{i\in B}\ell(i)-\frac{1}{|B|}\sum_{i\in B}\tLcal(i)=\frac{1}{|B|} \sum_{i\in B}\left(\Lcal(i) - \tLcal(i)\right)+\frac{1}{|B|} \sum_{i\in B}\left(\ell(i) -\Lcal(i)\right).
\end{align*}
Here, we observe that   $(\ell(i)-\tell(i))=0$ if the pseudo label of $i$-th node is correct. In other words,   $(\ell(i)-\tell(i))=0$  if $\Lcal_1(i)=0$, where $\Lcal_1(i)$ is the 0-1 loss of $i$-th node with the  model $F_{\htheta}(G)$ using the original graph $G$ \textit{where $\htheta$ is fixed at the end of the first stage of the curriculum-based training} (with or without dropping edges in the second stage). Since $(\ell(i)-\tell(i)) \le 1$ and $\Lcal_1(i) \in \{0,1\}$, this implies that $$
\ell(i)-\tell(i)\le\Lcal_1(i). 
$$
Similarly,
$$
\Lcal(i) - \tLcal(i) \le\Lcal_1(i). 
$$
Combining these,  
\begin{align*}
&\Delta_2 \le   \sqrt{\frac{ 8 d\ln(\frac{12 e R}{\delta})}{R}}+\frac{1}{|B|} \sum_{i\in B}\Lcal_1(i)+ \sqrt{\frac{\ln (3/\delta)}{2T}} 
\\ & \Delta_3 \le   \sqrt{\frac{ 8 d\ln(\frac{12 e R}{\delta})}{R}}+\frac{1}{|B|} \sum_{i\in B}\Lcal_1(i)+\frac{1}{|B|} \sum_{i\in B}\left(\ell(i) -\Lcal(i)\right)+ \sqrt{\frac{\ln (3/\delta)}{2T}}
\end{align*}
 Here, by invoking \citep[Proposition 1]{esser2021learning} with the original graph $G$, we have that for any $\delta>0$, with probability at least  $1-\delta$, 
$$
\frac{1}{|B|} \sum_{i\in B}\Lcal_1(i)\le\frac{1}{|S|} \sum_{t \in S} \Lcal_{1}(t)+\sqrt{\frac{ 8\ln(\frac{4 e |S|}{\delta})}{|S|}}, 
$$
where we can remove $d$ because the rank of the graph aggregation matrix for this part of the loss is one. By combining these with \eqref{eq:1} via  union bounds, we have that for any $\delta>0$, with probability at least  $1-\delta$, 
\begin{align*}
\nonumber &\Delta_1\le  \sqrt{\frac{ 8 d\ln(\frac{16 e R}{\delta})}{R}} +\sqrt{\frac{ 8 d\ln(\frac{16 e |S|}{\delta})}{|S|}}+ \sqrt{\frac{\ln (4/\delta)}{2T}}, 
\\&\Delta_2\le  \sqrt{\frac{ 8 d\ln(\frac{16 e R}{\delta})}{R}}+\sqrt\frac{ 8\ln(\frac{16 e |S|}{\delta})}{|S|}+\frac{1}{|S|} \sum_{t \in S} \Lcal_{1}(t)+ \sqrt{\frac{\ln (4/\delta)}{2T}}, \text{ and, }
\\&\Delta_3\le  \sqrt{\frac{ 8 d\ln(\frac{16 e R}{\delta})}{R}}+\sqrt\frac{ 8\ln(\frac{16 e |S|}{\delta})}{|S|}+\frac{1}{|S|} \sum_{t \in S} \Lcal_{1}(t)+ \sqrt{\frac{\ln (4/\delta)}{2T}}+\frac{1}{|B|} \sum_{i\in B}\left(\ell(i) -\Lcal(i)\right).
\end{align*}

\end{proof}

\end{document}